
\documentclass[journal]{IEEEtran}



\ifCLASSOPTIONcompsoc
  \usepackage[nocompress]{cite}
\else
  \usepackage{cite}
\fi

\ifCLASSOPTIONcompsoc
  \usepackage[caption=false,font=normalsize,labelfont=sf,textfont=sf]{subfig}
\else
  \usepackage[caption=false,font=footnotesize]{subfig}
\fi

\usepackage{hyperref}       
\usepackage{url}            
\usepackage{amsfonts}       
\usepackage{nicefrac}       
\usepackage{microtype}      
\usepackage{xcolor}         
\usepackage{graphicx}
\usepackage{booktabs}                                   
\usepackage{multirow}                                   
\usepackage{algorithmic,algorithm}
\usepackage{tablefootnote}                              
\usepackage[symbol]{footmisc}                           
\usepackage{amsmath}                                    
\usepackage{amssymb}                                    
\usepackage{xcolor}                                     
\usepackage{enumitem}                                   


\newcommand{\field}[1]{\mathbb{#1}}
\newcommand{\vct}[1]{\mathbf{#1}}                   
\newcommand{\mat}[1]{\mathbf{#1}}                   
\newcommand{\set}[1]{\mathcal{#1}}                      
\newcommand{\func}[1]{{#1}}                             
\newcommand{\T}{^{\top}}                                
\newcommand{\R}{\field{R}}                              

\def\fF{\func{F}}
\def\mTheta{\mat{\Theta}}

\def\vx{\vct{x}}
\def\vy{\mathrm{y}}
\def\mX{\mat{X}}
\def\mY{\mat{y}}

\def\vc{\boldsymbol{c}}
\def\vd{\boldsymbol{d}}

\def\vo{\vct{o}}
\def\voo{\vct{\tilde{o}}}
\def\mO{\mat{O}}
\def\mOO{\mat{\tilde{O}}}
\def\vm{\vct{m}}
\def\mV{\mat{V}}
\def\sI{\set{I}}
\def\sII{\set{\widehat{I}}}
\def\sU{\set{U}}

\def\mVIl{\mV_{\sI_l}}

\def\mVIlopt{\mV_{\sI_{l}^{*}}}

\def\vel{\vct{e}^{(l)}}

\newcommand{\ii}[2]{{#1}^{[#2]}}

\def\Gb{\mat{G}^{(b)}}
\def\Gw{\mat{G}^{(w)}}
\def\Db{\mat{\Lambda}^{(b)}}
\def\Dw{\mat{\Lambda}^{(w)}}
\def\Gbij{\mathrm{G}^{(b)}_{i,j}}
\def\Gwij{\mathrm{G}^{(w)}_{i,j}}
\def\Dbii{\mathrm{\Lambda}^{(b)}_{i,i}}
\def\Dwii{\mathrm{\Lambda}^{(w)}_{i,i}}
\def\Gwl{\mat{G}^{(w)}_{l}}
\def\Gbl{\mat{G}^{(b)}_{l}}

\def\sS{\set{S}}
\def\fA{\mathcal{A}}
\def\fL{\mathcal{L}}
\def\fC{\mathcal{C}}

\def\fT{\mathcal{T}}
\def\fD{\mathcal{D}}
\def\pfT{\partial\fT}
\def\pfD{\partial\fD}
\def\fDT{\Gamma}

\def\nN{\mathcal{N}}
\def\nTB{\mathcal{T_B}}

\def\nsli{s_{l,i}}
\def\fHl{H_l}

\newtheorem{theorem}{\textbf{Theorem}}
\newtheorem{definition}{\textbf{Definition}}
\newtheorem{assumption}{\textbf{Assumption}}
\newtheorem{lemma}{\textbf{Lemma}}
\newenvironment{proof}{{\noindent\it \textbf{Proof}}\quad}{$\hfill\blacksquare$}



\newcommand{\eat}[1]{}                                  


\def\catro{{CATRO}}


\begin{document}

\title{{\catro}: Channel Pruning via Class-Aware\\Trace Ratio Optimization}

\author{Wenzheng~Hu,
        Zhengping~Che,~\IEEEmembership{Member,~IEEE},
        Ning~Liu,~\IEEEmembership{Member,~IEEE},
        Mingyang~Li,\\
        Jian~Tang,~\IEEEmembership{Fellow,~IEEE},
        Changshui~Zhang,~\IEEEmembership{Fellow,~IEEE},
        and~Jianqiang~Wang
\thanks{W. Hu is with
  The State Key Laboratory of Automotive Satefy and Energy, Tsinghua University, Beijing, China, 100084
  and with Kuaishou Technology Co., Ltd., Beijing, China, 100085.
  (E-mail: hwz13@tsinghua.org.cn and huwenzheng@kuaishou.com)
}
\thanks{Z. Che, N. Liu, and J. Tang are with
  Midea Group, Beijing, China, 100102.
  (E-mail: chezp@midea.com, liuning22@midea.com, and tangjian22@midea.com)
}
\thanks{M. Li and C. Zhang are with
  Institute for Artificial Intelligence, Tsinghua University (THUAI),
  The State Key Lab of Intelligent Technologies and Systems,
  Beijing National Research Center for Information Science and Technology (BNRist),
  and Department of Automation, Tsinghua University, Beijing, China, 100084.
  (E-mail: gogolimingyang@gmail.com and zcs@mail.tsinghua.edu.cn)
}
\thanks{J. Wang is with
  The State Key Laboratory of Automotive Satefy and Energy, Tsinghua University, Beijing, China, 100084.
  (E-mail: wjqlws@tsinghua.edu.cn)
}
}

\markboth{{\catro}: Channel Pruning via Class-Aware Trace Ratio Optimization}%
{}
%

\maketitle

\begin{abstract}
  Deep convolutional neural networks are shown to be overkill with high parametric and computational redundancy in many application scenarios, and an increasing number of works have explored model pruning to obtain lightweight and efficient networks.
However, most existing pruning approaches are driven by empirical heuristic and rarely consider the joint impact of channels, leading to unguaranteed and suboptimal performance.
In this paper, we propose a novel channel pruning method via \textbf{c}lass-\textbf{a}ware \textbf{t}race \textbf{r}atio \textbf{o}ptimization (\catro) to reduce the computational burden and accelerate the model inference.
Utilizing class information from a few samples, {\catro} measures the joint impact of multiple channels by feature space discriminations and consolidates the layer-wise impact of preserved channels.
By formulating channel pruning as a submodular set function maximization problem,
{\catro} solves it efficiently via a two-stage greedy iterative optimization procedure.
More importantly, we present theoretical justifications on convergence of {\catro} and performance of pruned networks.
Experimental results demonstrate that {\catro} achieves higher accuracy with similar computation cost or lower computation cost with similar accuracy than other state-of-the-art channel pruning algorithms.
In addition, because of its class-aware property, {\catro} is suitable to prune efficient networks adaptively for various classification subtasks,
enhancing handy deployment and usage of deep networks in real-world applications.

\end{abstract}

\begin{IEEEkeywords}
  Deep Model, Pruning, Compression, Subtask, Trace Ratio.
\end{IEEEkeywords}

%
\IEEEpeerreviewmaketitle

\section{Introduction}

\IEEEPARstart{I}{n} the past few years, deep convolution neural networks~(CNNs) have achieved impressive performance in computer vision tasks, especially image classification~\cite{krizhevsky2012imagenet,kaiming2016deep}.
However, deep models often have an enormous number of parameters, which requires colossal memory and massive amounts of computations.
These requirements not only increase infrastructure costs but also impose a great challenge to deploy models in resource-constrained environments, including mobile devices, embedded systems, and autonomous robots.
Therefore, it is significant to obtain deep models with high accuracy but relatively low computations used in various scenarios.
Pruning is an effective way to accelerate and compress deep networks by removing less important connections in the network.
Channel pruning~\cite{li2016pruning,he2019filter,kang2020operation}, as a hardware-friendly method, directly reduces redundancy computation by removing channels in convolutional layers and is widely used in practice.

Among recent advances in channel pruning, most approaches are driven by empirical heuristic and rarely consider the channel dependency.
Many pruning methods directly measure the importance of individual channels by the weights of filters~\cite{li2016pruning,he2018soft,he2019filter},
and others introduce some intuitive losses~\cite{luo2017thinet,zhuang2018discrimination,kang2020operation} about reconstruction, discrimination, sparsity, etc.
However, neglecting the dependency among channels leads to suboptimal pruning results.
For example, even multiple channels are important when examined independently, keeping them all in a pruned network may still be redundancy because of similar extracted features from different channels.
On the contrary, two or more individually unimportant channels may have boosted impact when working together.
Such joint impact have been moderately studied in machine learning methods~\cite{krishnapuram2004bayesian,nie2008trace,wang2015joint} but is rarely exploited in channel pruning.
CCP~\cite{peng2019collaborative} moves one step ahead with only pairwise channel correlations in pruning, and it remains demanding and challenging for effective utilizations and rigorous investigations of multiple channel joint impact.

\begin{figure}[t]
    \centering
    \includegraphics[width=0.9\columnwidth]{./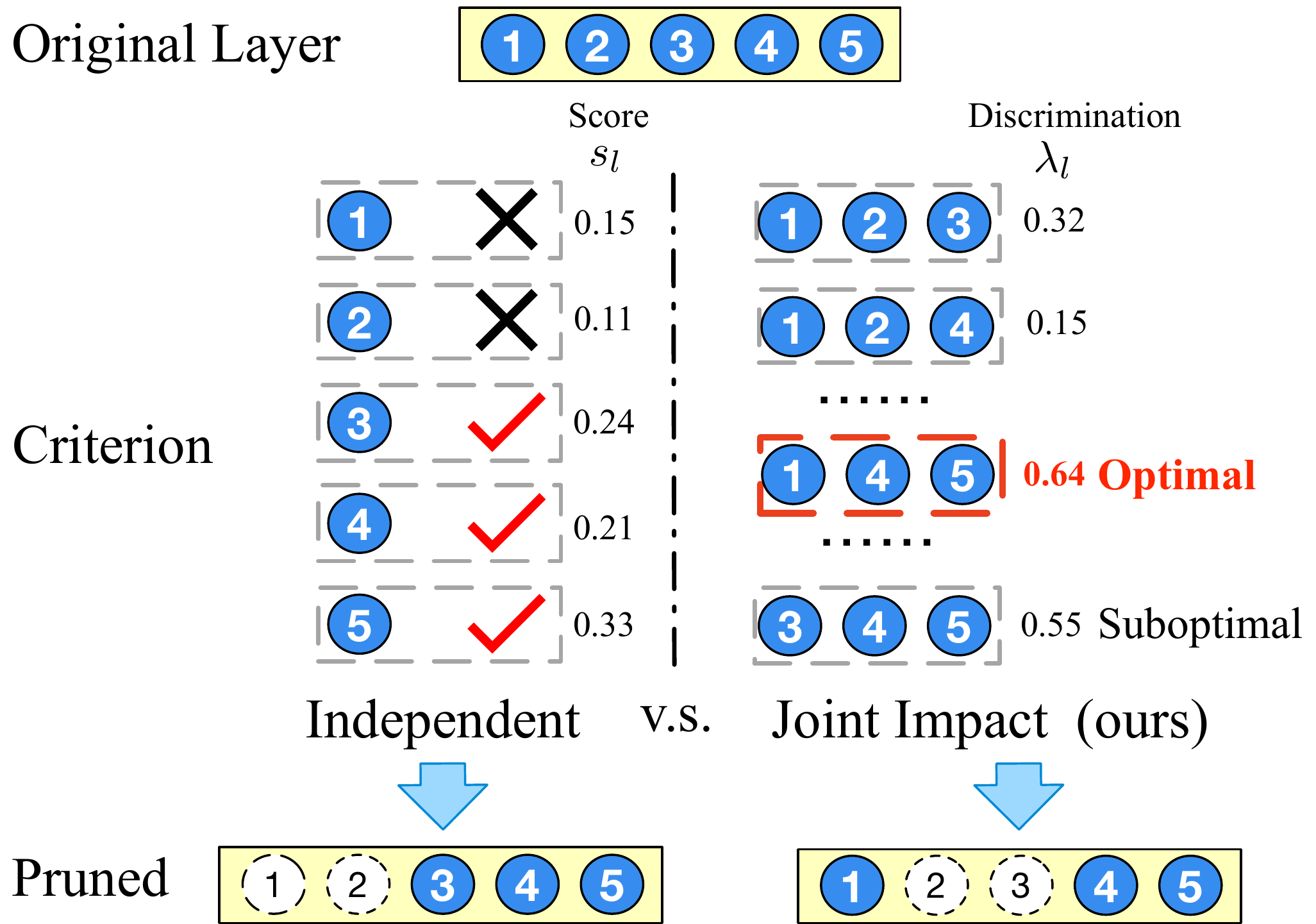}
    \caption{Illustrative comparisons of existing pruning methods treating channels independently with suboptimal results and our method ({\catro}) considering the joint impact of channels with optimal solutions.}
    \label{fig:vs}
\end{figure}

In this paper, we develop a novel channel pruning method. As shown in Fig.~\ref{fig:vs}, our method benefits from taking joint impact of channels into account and is superior to most existing methods pruning channels independently.
The proposed method extends network discriminative power~\cite{zhuang2018discrimination,tian2021task} for multi-channel setting and find channels by solving a combinatorial optimization problem.
To be more specific, we firstly transfer the channel combination optimization into a graph trace optimization,
and then we solve the trace optimization in a global view to obtain a result with the consideration of the joint impact.
By formulating the pruning objective into a submodular set function maximization problem,
we further show a theoretical lower boundary guarantee on accuracy and convergence under an assumption about the relationship between discrimination and accuracy.

In addition, different from most approaches measuring the channel importance by the filter weights, the discrimination in {\catro} is based on input samples and class-aware in the optimization steps.
This enables {\catro} to adaptively mine subsets of data samples and handle various pruning tasks with the same original network.
For example, suppose we have a large traffic sign classification model that works perfectly in the whole world, and we need to deploy a tiny model on an autonomous vehicle operated in a small town, where only a subset types of signs are on the road.
This kind customization and compression combination tasks are ubiquitous in the real world and require efficient solutions.
We name such application scenarios \emph{classification subtask compression} and showcase that {\catro} is quite suitable in these scenarios.

To the best of our knowledge,
{\catro} is the first to exploit the multi-channel joint impact to find the optimal combination of preserved channels in pruning
and the first to use submodularity for effective and theoretically guaranteed pruning performance.
We summarize our contributions as follows:
\begin{itemize}[itemsep=1pt,topsep=1pt,parsep=1pt,leftmargin=15pt] %
    \item We introduce graph-based trace ratio as discrimination criteria to consider the multi-channel joint impact for channel pruning.
    \item We formulate the pruning to a submodular maximization problem, provide theoretical guarantees on convergence of the pruning step and performance of pruned networks, and solve it with an efficient two-stage algorithm.
    \item We demonstrate in extensive experiments that our method achieves superior performance compared to existing channel pruning methods by the efficiency and accuracy of the pruned networks.
\end{itemize}

\section{Related Work}

\subsection{Channel Pruning}
The recent advances of weight pruning can be mainly categorized into \emph{unstructured pruning} and \emph{channel pruning}. \emph{Unstructured pruning} can effectively prune the over-parameterized deep neural networks (DNNs) but results in irregular sparse matrices, which require customized hardwares and libraries to support the practical speedup, leading to a limited inference acceleration in most practical cases~\cite{han2015,han2016deep_compression}. Recent work concentrates on channel pruning~\cite{liu2017learning,he2019filter,dai2019nest,lin2020hrank,he2020learning,DBLP:journals/corr/abs-2011-02166,zheng2022model,he2022filter,chen2021dynamical,tang2022automatic,peng2021overcoming}, which prunes the whole channel and results in structured sparsity. The compacted DNN after channel pruning can be directly deployed on the existing prevalent CNN acceleration frameworks without dedicated hardware/libraries implementation.
Early work~\cite{wen2016learning} proposes a straightforward pruning heuristic, penalizing weights with $l_{1}$-norm group lasso regularization and then eliminate the least $l_{1}$-norm of channels.
Network Slimming~\cite{liu2017learning} introduces a factor in the batch normalization layer with respect to each channel and evaluates the importance of channels by the magnitude of the factor. Then, many criteria are proposed to evaluate the importance of neuron.
SCP~\cite{he2018soft} measures the relative importance of a filter in each layer by calculating the $l_{1}$-norm and $l_{2}$-norm, respectively. FPGM~\cite{he2019filter} removes the filters minimizing the sum of distances to others.
Variational CNN pruning~\cite{zhao2019variational} provides a Bayesian model compression technique, which approximates batch-norm scaling parameters to a factorized normal distribution using stochastic variational inference. GBN~\cite{you2019gate} introduces a gate for each channel and performs a backward method to optimize the gates. Then it independently prunes channels by the rank of gates.
Similarily, DCPH~\cite{chen2021dynamical} designs a channelwise gate to enable or disable each channel.
Greedy algorithm is also applied to identify the subnetwork inside the original network~\cite{ye2020good}.
Recently, MFP~\cite{he2022filter} introduces a new set of criteria to consider the geometric distance of filters.
These works evaluate and prune the importance of channels individually.
Another work~\cite{tian2021task} leverage Fisher's linear discriminant analysis (LDA) to prune the last layer of the DNN, which, however is empirical heuristic and suffers from the expensive cost of cross-layer tracing.
CCP~\cite{peng2019collaborative} exploits the inter-channel dependency but stops with only pairwise correlations. It prunes channels without global consideration either.
SCOP~\cite{Tang2020SCOP} utilize samples to set up a scientific control and then prunes the neural network in the group by introducing generated input features. Recent prevailing AutoML-based pruning approaches~\cite{he2018amc,dong2019network,zechun2019Meta,lin2020channel,Liu2020Autocompress,LiWSW2020EagleEye} can achieve a high FLOPs reduction of DNNs.
For instance, AMC~\cite{he2018amc} leverages deep reinforcement learning (DRL) to let the agent optimizes the policy from extensive experiences by repeatedly pruning and evaluating for DNNs.
DNCP~\cite{zheng2022model} uses a set of learnable architecture parameters, and calculates the probability
of each channel being retained based on these parameters.
However, they usually requires tremendous computation budgets due to the large search space and trial-and-error manner.

\subsection{Submodular}
In recent years, the combinatorial selection based on submodular functions has been one of the most promising methods in machine learning and data mining. It has been a surge of interest in lots of computer tasks, such as visual recognition~\cite{zheng2016submodular}, segmentation~\cite{zhang2018exploring}, clustering~\cite{liu2013entropy,shen2019submodular}, active learning~\cite{DBLP:conf/icml/WeiIB15} and user recommendation~\cite{DBLP:conf/ijcai/AshkanKBW15}.
The submodular set function maximization~\cite{nemhauser1978analysis}, which is to maximize a submodular function, is one of the most basic and important problem. There exist several algorithms for solving nonnegative and monotone submodular function maximization. Under the uniform matroid constraint~\cite{nemhauser1978analysis}, a standard greedy algorithm gives
an $1-e^{-1}$
approximation~\cite{feige1998threshold,calinescu2007maximizing,chekuri2015multiplicative} in polynomial time, and
$1-e^{-1}$ is shown to be the best possible approximation unless $P = NP$~\cite{feige1998threshold}.
There are also many works investigating more effective algorithms~\cite{badanidiyuru2014streaming,buschjager2021very} and many other works investigating the maximization of a submodular set function under different constraints~\cite{iyer2013submodular,buchbinder2014submodular,sadeghi2020online}. Although submodularity has been widely used in many tasks and widely studied, it has not been explored in model compression as far as we known.

\section{Methodology}

\subsection{Problem Reformulation}

For a CNN $\fF(\cdot; \mTheta)$ with a stack of $L$ convolution (CONV) layers and weight parameters $\mTheta$, given a set of $\nN$ image samples $\{\ii{\vx}{i}\}_{i=1}^\nN$ from $K$ classes
and the corresponding labels~$\{\ii{\vy}{i} \in \{1, \cdots, K\} \}_{i=1}^\nN$,
channel pruning aims to find binary channel masks $\vm_l$ with the following objective:
\begin{equation}
\begin{aligned}
    \label{eq:opt0}
    \min_{\mTheta,\vm_l} &\   \frac{1}{\nN} \sum_{i=1}^{\nN} \fL\left(\ii{\vx}{i}, \ii{\vy}{i} ,\mTheta, \{\vm_l\}_{l=1}^L\right)\\
    \text{s.t.}   &  \  \fT\left( \fF(\cdot), \mTheta,\{\vm_l\}_{l=1}^L)\right) \le \nTB, \ \ \vm_l \in {\{0, 1\}}^{c_l},
\end{aligned}
\end{equation}
where $\fL(\cdot)$ is the loss function (e.g., cross-entropy for classification),
$c_l$ is the channel number of the $l$-th layer,
$\fT(\cdot)$ is the FLOPs function,
and $\nTB$ is a given FLOPs constraint.

Most methods prune channels with some surrogate layer-wise criteria $\fC(\cdot)$ instead of directly solving Eq.\eqref{eq:opt0}.
Following this setting and with the goal of keeping $d_l$ channels in layer~$l$, we can express the pruning objective as follows:
\begin{equation}
\begin{aligned}
    \label{eq:opt1}
    \mathop{Optimize}_{\vm_{l}}  &\   \frac{1}{\nN} \sum_{i=1}^{\nN}   \fC(\ii{\vo}{i}_l\odot \vm_{l}) \\
    \text{s.t.}   & \  \vm_{l}\in \{0,1\}^{c_{l}},\lVert \vm_{l}\lVert_{0}= d_{l},
\end{aligned}
\end{equation}
where
$\ii{\vo}{i}_l \in \R^{c_l\times w_l \times h_l}$ is the feature map of input $\ii{\vx}{i}$ in the \mbox{$l$-th} layer, and
$\odot$ is elementwise multiplication with broadcasting.
A variety of $\fC(\cdot)$ have been proposed~\cite{luo2017thinet,zhuang2018discrimination,he2019filter,kang2020operation}, which may not be completely equivalent to solving Eq.\eqref{eq:opt0} but effective in practice.

Alternatively, we can formulate Eq.\eqref{eq:opt1} as a matrix projection problem.
For clarity, we denote the $i$-th element in a vector (or an ordered set) $\mat{a}$ as $\mat{a}(i)$.
We build an ordered set of the channel index $\sI_{l} = \{ i | \vm_l(i) = 1\}$ with $|\sI_{l}|=d_{l}$
and an indicator matrix $\mVIl = [\vel_{\sI_l(1)}, \cdots, \vel_{\sI_l(d_l)}]\in \R^{c_{l} \times d_{l} }$,
where $\vel_t \in  {\{0, 1\}}^{c_l}$ denotes the one-hot vector with its $t$-th element as one.
Therefore, Eq.\eqref{eq:opt1} can been expressed as follows:
\begin{equation}
\begin{aligned}
    \label{eq:opt2}
    \mathop{Optimize}_{\sI_{l}}  &\   \frac{1}{\nN} \sum_{i=1}^{\nN}   \fC(\mVIl\T \ii{\vo}{i}_l)\\
    \text{s.t.}   &\ \text{rank}({\mVIl})= d_{l} .
\end{aligned}
\end{equation}

For an input $\ii{\vx}{i}$, we denote the pruned feature map in layer~$l$ with $\sI_{l}$ by $ \ii{\voo}{i}_{l} = \mVIl\T \ii{\vo}{i}_l \in \R^{d_l\times w_l \times h_l} $.

\subsection{Trace Ratio Criterion for Channel Pruning}

\begin{figure}[b]
    \centering
    \includegraphics[clip=true,width=0.99\columnwidth]{./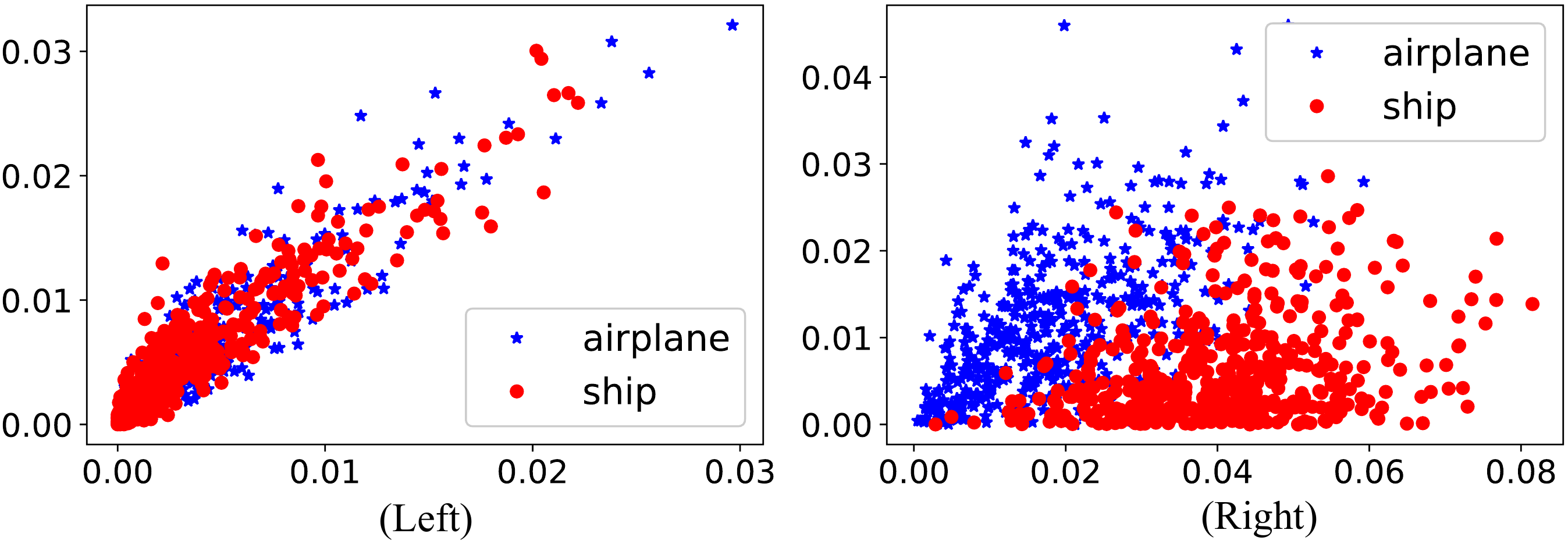}
    \caption{Visualizations of data samples from CIFAR-10 in two different feature spaces. Spaces in (Left) and (Right) are of different channel pairs in the 18-th layer of the same ResNet-20.
    }
    \label{fig:dis}
\end{figure}

In order to obtain effective criteria for model pruning, inspired by the classical Fisher Linear Discriminant Analysis (LDA)~\cite{bishop1995neural}, we dig into the feature discrimination for measuring multi-channel joint impact.
As a simple example,
Fig.~\ref{fig:dis} visualizes 
two feature spaces from the $18$-th layer in a well-trained ResNet-20 model trained on CIFAR-10.
The red and blue points represent features of images containing ships and airplanes, respectively.
We have an intuitive feeling that the features produced by channels in the right figure are more discriminative,
and a classifier built on that space is more likely to outperform that on the left space.
For deep CNNs with more layers, we hypothesis the discrimination effects accumulate and the classifier performance gap enlarges.
We describe the relation between space discrimination and classifier accuracy in Assumption~\ref{asp:1}.

\begin{assumption}
    \label{asp:1}
    For a classification task, if samples are more discriminatory in feature space $\sS_1 \in \R^{d}$ than in feature space~$\sS_2 \in \R^{d}$, it holds that the accuracy of a classifier built on $\sS_1$, namely $\fA(;\sS_1)$, is no less than that of the classifier built on $\sS_2$ with large probability, i.e.,
    $\fA(;\sS_1) \ge_{p} \fA(;\sS_2)$,
    where $\ge_{p}$ means ``no less with probability $p$''.
\end{assumption}

The above assumption enables us to concretely measure and optimize the discriminations.
Suppose $\mX = [\ii{\vx}{1}, \cdots, \ii{\vx}{N}] \in \R^{c \times w \times h \times N}$ denotes $N$ data samples from the training set with $K$ classes and $\mY = [\ii{\vy}{1}, \cdots, \ii{\vy}{N}]$ denotes the corresponding class label.
We build two
graph weight
matrices $\Gw  \in \R^{N\times N}$ and $\Gb  \in \R^{N\times N}$ to depict the prior knowledge about data sample relationship.
$\Gw $ reflects within-class relationship, where each element $\Gwij$ has a larger value if samples $\ii{\vx}{i}$ and $\ii{\vx}{j}$ belong to the same class, and smaller otherwise.
Analogically, $\Gb $ represents the between-class relationship, where $\Gbij$ is larger if samples $\ii{\vx}{i}$ and $\ii{\vx}{j}$ belong to different classes.
Without any other priors,
we directly use the Fisher score~\cite{bishop1995neural} as the graph metric for its simplicity and efficiency:
\begin{eqnarray}
    \label{eq:gw1b1}
    \Gwij &=& \left\{
                    \begin{array}{ll}
                      \frac{1}{n_{k}},&\ \ \ii{\vy}{i}=\ii{\vy}{j}=k\\  %
                      0,& \ \ \ii{\vy}{i} \neq \ii{\vy}{i}
                    \end{array}
                  \right.\\
     \label{eq:gb1}
    \Gbij &=& \frac{1}{N} - \Gwij,
\end{eqnarray}
where $n_{k}$ is the number of samples in the $k$-th class, and $\sum_{k=1}^K n_{k}=N$.

Given Assumption~\ref{asp:1} and the prior matrices in Eq.\eqref{eq:gw1b1} and Eq.\eqref{eq:gb1}, a natural criterion $\fC(\cdot)$ for channel pruning is to keep channels that reduce the discrimination of samples within the same class ($\sum_{ij}||\ii{\voo_{l}}{i}-\ii{\voo_{l}}{j}||\Gwij$) and enlarge that of samples from different classes ($\sum_{ij}||\ii{\voo_{l}}{i}-\ii{\voo_{l}}{j}||\Gbij$).
Therefore, we can replace Eq.\eqref{eq:opt2} by
\begin{equation}
   \begin{aligned}
        \label{eq:opt3}
        \max_{\sI_{l}} &\ \frac{\sum_{ij}||\mVIl\T \ii{\vo}{i}_{l}-\mVIl\T \ii{\vo}{j}_{l}||\Gbij}{\sum_{ij}||\mVIl\T \ii{\vo}{i}_{l}-\mVIl\T \ii{\vo}{j}_{l}||\Gwij}\\
        \text{s.t.} &\ |\sI_{l}|=d_{l}.
    \end{aligned}
\end{equation}

As
$\Dw$ is a diagonal matrix with $\Dwii=\sum_{j}\Gwij$ and
$\Db$ is a diagonal matrix with $\Dbii=\sum_{j}\Gbij$, $\Dw-\Gw$ and $\Db-\Gb$ are two Laplace matrices. Based on the Laplace matrix properties, Eq.\eqref{eq:opt3} is mathematically equivalent to the following trace optimization:
\begin{equation}
    \begin{aligned}
        \max_{\sI_{l}} &\ \frac{tr(\mVIl\T \mO_l(\Db-\Gb)\mO_l\T {\mVIl})}{tr(\mVIl\T \mO_l(\Dw-\Gw)\mO_l\T {\mVIl})} \label{eq:opt4}\\
        \text{s.t.} &\ |\sI_{l}|=d_{l},
    \end{aligned}
\end{equation}
where $\mO_l = [\ii{\vo}{1}, \cdots, \ii{\vo}{N}]\in \R^{c_l \times w_l \times h_l \times N}$ is the aggregated feature map for $\mX$ in layer $l$,
and $tr(\cdot)$ denotes the trace.
Equivalently,
our goal is to find the optimal channel index set~$\sI_{l}^{*}$ with the maximum trace ratio criterion for discrimination:
\begin{equation}
  \lambda_{l}^{*} = \frac{tr(\mVIlopt \T \Gbl \mVIlopt)}{tr(\mVIlopt \T \Gwl \mVIlopt)},
\end{equation}
where $\Gwl = \mO_l(\Dw-\Gw)\mO_l\T$ and $\Gbl = \mO_l(\Db-\Gb)\mO_l\T$.
Since $tr(\mVIlopt \T(\Gbl-\lambda_{l}^{*}\Gwl)\mVIlopt) = 0$, for all $\sI_{l}$ satisfying $|\sI_{l}|=d_{l}$ we have
\begin{eqnarray}
&\frac{tr(\mVIl \T \Gbl \mVIl)}{tr(\mVIl \T \Gwl \mVIl)} \leq \lambda_{l}^{*}\label{ieq:1}\\
\Leftrightarrow  &tr(\mVIl \T(\Gbl-\lambda_{l}^{*}\Gwl)\mVIl) \leq 0.\label{ieq:2}
\end{eqnarray}

Therefore, to optimize Eq.\eqref{eq:opt4} is further equivalent to optimize the following optimization:
\begin{equation}
\begin{aligned}
\max_{\sI_{l}} & \ tr(\mVIl \T(\Gbl-\lambda_{l}^{*}\Gwl)\mVIl) \label{eq:opt5}\\
s.t. &\ |\sI_{l}|=d_{l}.
\end{aligned}
\end{equation}

To be more clear, the objective in Eq.\eqref{eq:opt4} is the left-hand side terms of the inequation in Eq.\eqref{ieq:1}, and the optimal solution~$I_{l}^{*}$ of Eq.\eqref{eq:opt4} makes the equality of the inequation in Eq.\eqref{ieq:1} hold (by the definition of $\lambda_l^*$).
And similarly, given $\lambda_{l}^{*}$, the objective in Eq.\eqref{eq:opt5} is the left-hand side terms of the inequation in Eq.\eqref{ieq:2}, and the solution $I_l$ of Eq.\eqref{eq:opt5} makes the equality of the inequation in Eq.\eqref{ieq:2} hold.
Lastly, the equality of the inequation in Eq.\eqref{ieq:1} holds if and only if the equality of the inequation in Eq.\eqref{ieq:2} holds.
Thus, the optimal solution of Eq.\eqref{eq:opt4} makes equality of the inequation in Eq.\eqref{ieq:1} hold, makes equality of the inequation in Eq.\eqref{ieq:2} hold, and is also the optimal solution of Eq.\eqref{eq:opt5}.
In contrast, the optimal solution of Eq.\eqref{eq:opt5} makes equality of the inequation in Eq.\eqref{ieq:2} hold, makes equality of the inequation in Eq.\eqref{ieq:1} hold, and is also the optimal solution of Eq.\eqref{eq:opt4}.

Noting that solving Eq.\eqref{eq:opt5} naturally considers the multi-channel joint impacts inherited in $\mV$.

\subsection{Iterative Layer-wise Optimization}
To solve Eq.\eqref{eq:opt5}, we introduce a submodular set optimization, provide theoretical guarantees
on the pruned model accuracy, and propose a layer-wise pruning procedure.

Firstly, we introduce $\nsli = \exp ( {\vel_i}\T (\Gbl -\lambda_l \Gwl )  \vel_i )$ as the discrimination score for channel $i$ in layer $l$ given a specified ratio $\lambda_l$,
and $\fHl(\sI) = \log (\sum_{i \in \sI} \nsli )$ as the index set function given an index set $\sI \subseteq \{1, \cdots, c_l\}$.
When $\lambda_{l}^{*}=\lambda_{l}$,
we notice that Eq.\eqref{eq:opt5} is equivalent to the following one:
\begin{equation}
   \begin{aligned}
        \label{eq:opth}
        \max_{\sI_{l}} & \ \fHl(\sI_l;\lambda_{l}) \\
        \text{s.t.} &\ |\sI_{l}|=d_{l}.
   \end{aligned}
\end{equation}

We introduce two nice properties of set functions and show $\fHl(\cdot)$ satisfies them.

\begin{definition}
\label{def:sub}
(\textit{Submodularity}): A set function $f(\sS):2^{\sU}\rightarrow \R$ is submodular if for any subset $\sS_1 \subseteq \sS_2 \subseteq \sU$ and any element $i \in \sU \setminus \sS_2$, it has
$    f(\sS_1 \cup \{i\}) - f(\sS_1) \ge f(\sS_2 \cup \{i\}) - f(\sS_2)$.
\end{definition}

\begin{definition}
\label{def:mon}
(\textit{Monotonicity}): A set function $f(\sS):2^{\sU}\rightarrow \R$ is monotone if for any subset $\sS_1 \subseteq \sS_2 \subseteq \sU$ and any element~$i \in \sU \setminus \sS_2$, it has
 $   f(\sS_1) \le f(\sS_2)$.
\end{definition}

\begin{lemma}
    \label{lemma:1}
    Given a $\lambda_{l}$, $\fHl(\cdot)$ is a set function with monotonicity and submodularity.
\end{lemma}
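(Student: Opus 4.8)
The plan is to reduce the claim to an elementary property of the log-sum function $\log\!\left(\sum_{i\in\sS}a_i\right)$ with strictly positive coefficients $a_i$. The crucial first observation is that every discrimination score is strictly positive: since $\nsli=\exp\!\left({\vel_i}\T(\Gbl-\lambda_l\Gwl)\vel_i\right)$ is the exponential of a real scalar, we have $\nsli>0$ for every $i$, irrespective of the sign of the quadratic form inside. This positivity, which is precisely the reason the exponential and logarithm were built into the definitions of $\nsli$ and $\fHl$, is what I will use throughout.

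For monotonicity I would argue directly from Definition~\ref{def:mon}. Fix $\sS_1\subseteq\sS_2\subseteq\sU$. Because each summand $\nsli$ is positive, passing from $\sS_1$ to $\sS_2$ only adds positive terms, so $\sum_{i\in\sS_1}\nsli\le\sum_{i\in\sS_2}\nsli$; applying the strictly increasing logarithm preserves this inequality and yields $\fHl(\sS_1)\le\fHl(\sS_2)$.

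For submodularity I would compute the marginal gain of a single index explicitly. Given $\sS_1\subseteq\sS_2$ and $i\in\sU\setminus\sS_2$, set $A=\sum_{j\in\sS_1}s_{l,j}$ and $B=\sum_{j\in\sS_2}s_{l,j}$. Then
\begin{gather*}
\fHl(\sS_1\cup\{i\})-\fHl(\sS_1)=\log\left(1+\tfrac{\nsli}{A}\right),\\
\fHl(\sS_2\cup\{i\})-\fHl(\sS_2)=\log\left(1+\tfrac{\nsli}{B}\right).
\end{gather*}
By the monotonicity just established (equivalently, by positivity of the summands) we have $0<A\le B$, hence $\nsli/A\ge\nsli/B$, and monotonicity of $\log$ gives the diminishing-returns inequality of Definition~\ref{def:sub}. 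The computation is routine once positivity is secured; the only point requiring care is to keep the denominators $A,B$ strictly positive so that $A\le B$ can be inverted into $\nsli/A\ge\nsli/B$. I therefore expect the main (minor) obstacle to be the edge case of empty index sets, where $\fHl$ is $-\infty$; this is harmless because the pruning constraint $|\sI_l|=d_l$ with $d_l\ge1$ only ever evaluates $\fHl$ on nonempty sets, so restricting the domain to nonempty subsets resolves it as a bookkeeping matter rather than a genuine difficulty.
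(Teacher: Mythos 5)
Your proof is correct and follows essentially the same route as the paper's: both establish positivity of the scores $\nsli$ via the exponential, derive monotonicity from adding positive summands under the logarithm, and obtain submodularity by comparing the marginal gains $\log\left(1+\nsli/A\right)$ and $\log\left(1+\nsli/B\right)$ for $A\le B$. Your explicit handling of the empty-set edge case is a minor bookkeeping refinement the paper omits, but the argument is the same.
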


\begin{proof}
Suppose $\sI \subseteq \sI' \subseteq \sU_{l}= \{1, \cdots, c_l\}$, where $c_l$ is the number of channels in layer $l$. For any $i\in \sU_{l}\setminus \sI'$, we have
\begin{equation*}
\begin{aligned}
\fHl(\sI') - \fHl(\sI)
=\log (1+\frac{\sum_{i\in \sI' \setminus \sI} \nsli}{\sum_{i\in \sI} \nsli})>0,
\end{aligned}
\end{equation*}
and
\begin{equation*}
\begin{aligned}
\fHl(\sI \cup \{i\}) - \fHl(\sI)
=&\log (1+\frac{\nsli}{\sum_{i\in \sI} \nsli})\\
>&\log (1+\frac{ \nsli}{\sum_{i\in \sI'} \nsli})\\
=&\fHl(\sI'\cup \{i\}) - \fHl(\sI').
\end{aligned}
\end{equation*}

Therefore, $\fHl(\cdot)$ is a set function with monotonicity and submodularity.
\end{proof}

\eat{
Next, we consider the optimization with a given $\lambda_{l}$. Suppose $I^{A}\subseteq I^{B}\subseteq U_{l}$. For any $i\in U_{l}\setminus I^{B}$, we have
\begin{equation}
\begin{aligned}
&H^{l}(I^{B}) - H^{l}(I^{A})\\
=&\log (1+\frac{\sum_{a\in I^{B}\setminus I^{A}} s_{a}^{l}}{\sum_{a\in I^{A}} s_{a}^{l}})>0,
\end{aligned}
\end{equation}
and
\begin{equation}
\begin{aligned}
&H^{l}(I^{A}\cup \{i\}) - H^{l}(I^{A})\\
=&\log (1+\frac{ s_{i}^{l}}{\sum_{a\in I^{A}} s_{a}^{l}})\\
>&\log (1+\frac{ s_{i}^{l}}{\sum_{a\in I^{B}} s_{a}^{l}})\\
=&H^{l}(I^{B}\cup \{i\}) - H^{l}(I^{B}).
\end{aligned}
\end{equation}
Therefore, $H^{l}(\cdot)$ is a set function with monotonicity (Definition \ref{def:mon}) and submodularity (Definiton \ref{def:sub}).
}

Secondly, to maximize $\fHl(\sI)$ with a fixed $\lambda_l$, we show that sequentially selecting the indexes $i$ with the largest score~$\nsli$ obtains a model with a guaranteed accuracy according to Theorem~\ref{theo:1}.

\begin{theorem}
\label{theo:1}
Given a $\lambda_{l}$, maximizing $\fHl(\cdot)$ by sequentially selecting $d_{l}$ elements with the highest scores from $\{\nsli | 1 \le i \le c_l \}$,
with $\sII_{l}$ representing the selected index set,
results a model that, with a large probability $p$, has a lower boundary on the accuracy related to the optimal index set $\sI_{l}^*$, i.e.,
\begin{equation*}
    \fA(\sII_{l};\lambda_{l}) \ge_{p} \Phi((1-\frac{1}{e})\fHl(\sI_{l}^*; \lambda_{l})),
\end{equation*}
where $\Phi(\cdot)$ is the monotone non-decreasing mapping from $H_{l}(\cdot)$ to accuracy $A(\cdot)$ under Assumption~\ref{asp:1}.
\end{theorem}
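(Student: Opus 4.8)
The plan is to reduce the statement to the classical $(1-1/e)$ guarantee for greedy maximization of a monotone submodular function under a cardinality constraint, and then to transport that guarantee from the set-function value $\fHl$ to the accuracy $\fA$ through the monotone map $\Phi$ furnished by Assumption~\ref{asp:1}. By Lemma~\ref{lemma:1}, $\fHl(\cdot)$ is already monotone and submodular, so the combinatorial ingredient is in place; what remains is to verify that the rule ``select the $d_l$ indices of largest score $\nsli$'' is exactly the greedy rule, and that the classical bound is applicable to this particular $\log$-sum objective.

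First I would show the two selection rules coincide. For a current set $\sI$, the marginal gain of adding $i$ is $\fHl(\sI\cup\{i\})-\fHl(\sI)=\log\bigl(1+\nsli/\sum_{j\in\sI}\nsli\bigr)$, which is strictly increasing in $\nsli$ and whose maximizer over the remaining indices does not depend on which elements already lie in $\sI$. Hence at every step greedy picks the remaining index of largest $\nsli$, and after $d_l$ steps it returns precisely the top-$d_l$ scores, i.e.\ the set $\sII_l$ of the theorem; the base case $\sI=\emptyset$ is handled by the normalization convention $\fHl(\emptyset)=0$, under which the first pick is again the largest score.

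Next I would invoke the classical greedy guarantee~\cite{nemhauser1978analysis}: for a normalized, monotone, submodular objective the greedy solution obeys $\fHl(\sII_l)\ge\bigl(1-(1-1/d_l)^{d_l}\bigr)\,\fHl(\sI_l^*)\ge(1-1/e)\,\fHl(\sI_l^*)$, where $\sI_l^*$ is the size-$d_l$ maximizer of Eq.~\eqref{eq:opth}. Finally, since Assumption~\ref{asp:1} asserts that higher discrimination yields no-smaller accuracy with probability $p$, and $\Phi$ is by definition the resulting monotone non-decreasing map from the discrimination value $\fHl$ to accuracy $\fA$, applying $\Phi$ to both sides of the inequality and using its monotonicity gives $\fA(\sII_l;\lambda_l)=\Phi(\fHl(\sII_l))\ge_p\Phi\bigl((1-1/e)\fHl(\sI_l^*)\bigr)$, the probabilistic qualifier being inherited from Assumption~\ref{asp:1}.

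The main obstacle I anticipate is not the approximation algebra but the two places where the argument must be fitted to the classical framework. The first is the normalization of $\fHl$: since $\fHl(\emptyset)=\log 0=-\infty$ and singletons $\fHl(\{i\})=\log\nsli$ may be negative, one must justify (via the convention $\fHl(\emptyset)=0$, or by a shift that keeps the function nonnegative on the reachable sets) that the hypotheses of~\cite{nemhauser1978analysis} genuinely hold for this $\log$-sum objective. The second, more conceptual, is the passage from a guarantee on the surrogate discrimination value to a guarantee on accuracy: this is exactly the step that consumes Assumption~\ref{asp:1}, and the probabilistic symbol $\ge_p$ must be carried through the monotone map $\Phi$ rather than extracted from the deterministic submodular bound.
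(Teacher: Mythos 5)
Your proposal follows essentially the same route as the paper's proof: the Nemhauser--Wolsey--Fisher telescoping argument gives $\fHl(\sII_{l})\ge(1-\frac{1}{e})\fHl(\sI_{l}^*)$, and Assumption~\ref{asp:1} together with the monotone map $\Phi$ then transports this bound to accuracy with probability $p$. The two details you flag --- that selecting the top-$d_l$ scores coincides with the greedy rule because the marginal gain $\log(1+\nsli/\sum_{j\in\sI}s_{l,j})$ is increasing in $\nsli$, and that $\fHl(\emptyset)=\log 0$ requires a normalization convention before the classical guarantee applies --- are both glossed over in the paper's own proof, so your version is, if anything, slightly more careful.
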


\begin{proof}
Suppose the optimal solution is $\sI_{l}^*=\{i_{1}^*,\cdots,i_{d_{l}}^*\}$, the index set obtained by sequentially selected top $d_{l}$ elements is $\sII_{l}=\{i_{1},\cdots,i_{d_{l}}\}$. Let $\sI_{l}^{t}=\{i_{1},\cdots,i_{t}\}$ represent the index set at step $t$ of the sequencital selection.
Since $\fHl(\cdot)$ is monotonic and submodular, following the similar proof~\cite{nemhauser1978analysis}, we have the inequalities:
\begin{eqnarray*}
\fHl(\sI_{l}^*)&\leq& \fHl(\sI_{l}^{t} \cup \sI_{l}^*)\\
&=& \fHl(\sI_{l}^{t}) + (\fHl(\sI_{l}^{t}\cup \{i_{1}^*\}) - \fHl(\sI_{l}^{t}))\\
&& + (\fHl(\sI_{l}^{t}\cup \{i_{1}^*,i_{2}^*\}) - \fHl(\sI_{l}^{t}\cup \{i_{1}^*\})\\
&& + \cdots \\
&&+ (\fHl(\sI_{l}^{t}\cup \sI_{l}^*) - \fHl(\sI_{l}^{t} \cup \{i_{1}^*,\cdots,i_{d_{l}-1}^*\})\\
& \leq & \fHl(\sI_{l}^{t}) + d_{l}(\fHl(\sI_{l}^{t+1})-\fHl(\sI_{l}^{t})),
\end{eqnarray*}
and then we have
\begin{eqnarray*}
&\fHl(\sI_{l}^*)-\fHl(\sI_{l}^{t+1})\leq (1-\frac{1}{d_{l}})(\fHl(\sI_{l}^*)-\fHl(\sI_{l}^{t}))\\
\Rightarrow& \fHl(\sII_{l}) \ge (1-(1-\frac{1}{d_{l}})^{d_{l}})\fHl(\sI_{l}^*)\\
\Rightarrow& \fHl(\sII_{l}) \ge (1-\frac{1}{e})\fHl(\sI_{l}^*).
\end{eqnarray*}

When $\sII_{l}$ is obtained, we can calculate a discrimination:
\begin{equation*}
  \widehat{\lambda_{l}}=\frac{tr(\mV_{\sII_{l}}\T \Gwl \mV_{\sII_{l}})}{tr(\mV_{\sII_{l}}\T \Gbl \mV_{\sII_{l}})}.
\end{equation*}

Thus, $\fHl(\sII_{l})$ can be regarded as a description of discrimination. Under the assumption 1, let $\Phi(\cdot)$ is the monotone non-decreasing mapping from $\fHl(\cdot)$ to the accuracy $\fA(\cdot)$, we have such an accuracy lower boundary with a large probability~$p$:
\begin{equation*}
    \fA(\sII_{l}) \ge_{p} \Phi((1-\frac{1}{e})\fHl(\sI_{l}^*)).
\end{equation*}

We can further express the boundary as
\begin{equation*}
    \fA(\sII_{l};\lambda_{l}) \ge_{p} \Phi((1-\frac{1}{e})\fHl(\sI_{l}^*;\lambda_{l})).
\end{equation*}
\end{proof}

Finally, we consider to iteratively optimize the discrimination ratio.
Suppose the current index set is $\sI_{l}$ and the discrimination ratio is $\lambda_{l}$.
After maximizing $\fHl(\sI_l; \lambda_{l})$ with the index sequential selection procedure, we get a new index set $\sI'_{l}$ and a new ratio $\lambda'_{l}$.
From Theorem~\ref{theo:2}, we have $\lambda'_{l} \ge \lambda_{l}$.

\begin{theorem}
\label{theo:2}
The $\lambda_{l}$ obtained by iteratively maximizing $\fHl(\cdot)$ is monotonic increasing and converges to the optimal value.
\end{theorem}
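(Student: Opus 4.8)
The plan is to recognize the iteration as a Dinkelbach-type fixed-point scheme for the trace-ratio problem in Eq.~\eqref{eq:opt4} and to establish monotonicity and convergence to $\lambda_l^*$ as two separate claims. The structural fact I would state first is that, because the columns of $\mVIl$ are distinct one-hot vectors, the trace objective is additive over the selected indices: $tr(\mVIl\T A \mVIl)=\sum_{i\in\sI_l} A_{ii}$ for any matrix $A$. Applied to $A=\Gbl-\lambda_l\Gwl$, this shows that maximizing $tr(\mVIl\T(\Gbl-\lambda_l\Gwl)\mVIl)$ subject to $|\sI_l|=d_l$ reduces to picking the $d_l$ largest diagonal entries, equivalently the $d_l$ largest scores $\nsli=\exp((\Gbl-\lambda_l\Gwl)_{ii})$ since $\exp$ is monotone. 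Hence the sequential selection of Theorem~\ref{theo:1} returns the \emph{exact} maximizer of this linearized trace: the log-of-modular structure makes greedy optimal, so the $1-\tfrac1e$ factor is not even needed here. I would make this equivalence explicit at the outset, because the rest of the argument hinges on the selected set being an exact maximizer.

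For monotonicity, I would fix the current set $\sI_l$ with ratio $\lambda_l=tr(\mVIl\T\Gbl\mVIl)/tr(\mVIl\T\Gwl\mVIl)$, so that by construction $tr(\mVIl\T(\Gbl-\lambda_l\Gwl)\mVIl)=0$. Since $\sI_l$ is itself a feasible candidate and the new set $\sI_l'$ is the exact maximizer of the linearized trace at $\lambda_l$, we obtain $tr(\mV_{\sI_l'}\T(\Gbl-\lambda_l\Gwl)\mV_{\sI_l'})\ge 0$. Expanding and dividing by the positive denominator $tr(\mV_{\sI_l'}\T\Gwl\mV_{\sI_l'})$ yields $\lambda_l'=tr(\mV_{\sI_l'}\T\Gbl\mV_{\sI_l'})/tr(\mV_{\sI_l'}\T\Gwl\mV_{\sI_l'})\ge\lambda_l$, which is the claimed monotone increase.

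For convergence, I would first bound the sequence: every feasible ratio is at most $\lambda_l^*$ by the definition of $\lambda_l^*$ as the maximum trace ratio, so the iterates satisfy $\lambda_l^{(k)}\le\lambda_l^*$. Because there are only $\binom{c_l}{d_l}$ possible index sets, the ratio takes finitely many values, and a monotone non-decreasing sequence over a finite value set must stabilize after finitely many steps at some fixed point $\bar\lambda$. To identify $\bar\lambda$ with $\lambda_l^*$, I would introduce $g(\lambda)=\max_{|\sI_l|=d_l} tr(\mVIl\T(\Gbl-\lambda\Gwl)\mVIl)$, a pointwise maximum of finitely many functions that are affine and decreasing in $\lambda$, hence continuous and, using $tr(\mVIl\T\Gwl\mVIl)>0$, strictly decreasing. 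The equivalence already established in the text gives $g(\lambda_l^*)=0$. At the fixed point the selected set is simultaneously the maximizer of $tr(\mVIl\T(\Gbl-\bar\lambda\Gwl)\mVIl)$ and has ratio exactly $\bar\lambda$, which forces $g(\bar\lambda)=0$; strict monotonicity of $g$ then pins down the unique zero $\bar\lambda=\lambda_l^*$.

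The main obstacle is this last step, namely ruling out convergence to a suboptimal fixed point. It is precisely where strict monotonicity of $g$ (equivalently, uniqueness of its zero) is required, and it rests on the non-degeneracy assumption that the within-class trace $tr(\mVIl\T\Gwl\mVIl)$ is strictly positive for every admissible selection --- an assumption I would flag explicitly, since $\Gwl$ is only guaranteed to be positive semidefinite as a Laplacian-type matrix. A secondary point needing care is the reduction of the generic greedy guarantee of Theorem~\ref{theo:1} to exact maximization of the linearized trace, which relies entirely on the one-hot (modular) structure noted at the start.
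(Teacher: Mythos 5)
Your proposal is correct and follows the same Dinkelbach-type skeleton as the paper --- reduce the trace-ratio update to maximizing the linearized trace $tr(\mVIl\T(\Gbl-\lambda_l\Gwl)\mVIl)$, exploit the one-hot/modular structure so that selecting the top-$d_l$ scores is an \emph{exact} maximizer, show the ratio sequence is non-decreasing, and conclude convergence from the finiteness of $\binom{c_l}{d_l}$ candidate sets. The execution of the monotonicity step differs: the paper defines $f(\lambda_l)=\max_{\sI_l}\exp\bigl(tr(\mVIl\T(\Gbl-\lambda_l\Gwl)\mVIl)\bigr)$, shows $f'\le 0$ and $f(\lambda_l^*)=1$, builds a log-linear tangent approximation $h$ at $\lambda_l^t$, and deduces $\lambda_l^{t+1}>\lambda_l^t$ from $h(\lambda_l^t)=f(\lambda_l^t)>1=h(\lambda_l^{t+1})$; you instead use the standard and more direct argument that the current set is feasible with linearized value $0$, so the new maximizer has value $\ge 0$, and dividing by the within-class trace gives $\lambda_l'\ge\lambda_l$. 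Your route is cleaner and, importantly, you make explicit two points the paper leaves implicit: (i) strict positivity of $tr(\mVIl\T\Gwl\mVIl)$ is needed both for the division and for strict monotonicity of the auxiliary function (in the paper this same degeneracy would make $f'=0$ and leave $h(\lambda_l^{t+1})=1$ unsolvable), and (ii) the identification of the stabilized value $\bar\lambda$ with $\lambda_l^*$ requires uniqueness of the zero of $g(\lambda)=\max_{|\sI_l|=d_l}tr(\mVIl\T(\Gbl-\lambda\Gwl)\mVIl)$, which the paper asserts only indirectly through $f(\lambda_l^*)=1$. One minor caveat: your monotonicity argument as stated gives $\lambda_l'\ge\lambda_l$ rather than the strict increase the theorem claims for $\lambda_l\ne\lambda_l^*$; to match the paper you would add that if $\lambda_l<\lambda_l^*$ then $\sI_l^*$ is feasible with strictly positive linearized value, so the maximum is strictly positive and the inequality is strict.
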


\begin{proof}
Under the specific indicator matrix
$\mVIl = [\vel_{\sI_l(1)}, \cdots, \vel_{\sI_l(d_l)}]\in \R^{c_{l} \times d_{l} }$,
where $\vel_i \in  {\{0, 1\}}^{c_l}$ denotes the one-hot vector with its $i$-th element as one.
we have the following equivalent optimizations:
\begin{eqnarray*}
& \max_{\sI_{l}} \exp\left(tr(\mV_{\sI_{l}}\T(\Gbl-\lambda_{l}\Gwl)\mV_{\sI_{l}})\right)\\
\Leftrightarrow& \max_{\sI_{l}} \sum_{i=1}^{d_{l}} (e_{\sI_{l}(i)}^{(l)})\T (\Gbl-\lambda_{l}\Gwl)e_{\sI_{l}(i)}^{(l)}\\
\Leftrightarrow&\max_{\sI_{l}} \log (\sum_{i\in \sI_{l}} \nsli)\\
\Leftrightarrow&\max_{\sI_{l}} \fHl\left(\sI_l; \lambda_{l}\right).
\label{opt6}
\end{eqnarray*}

The first equivalence relation comes from the definition of trace and $\mV_{\sI_{l}}$, the second equivalence relation comes from the monotonicity of exponential function and logarithmic function, and the last comes from the definition of $\fHl(\cdot)$.

Suppose the current index set (of the iteration round $t$) is $\sI_{l}^{t}$ and the discrimination is $\lambda_{l}^{t}$, and after maximizing $\fHl(\sI_l;\lambda_{l}^{t})$ we get the next index set~$\sI_{l}^{t+1}$.
Let
\begin{eqnarray*}
    f(\lambda_{l}^t) &=&{\max}_{\sI_{l}} \exp\left(tr(\mV_{\sI_{l}}\T(\Gbl-\lambda_{l}^t\Gwl)\mV_{\sI_{l}})\right)\\
    &=& \exp\left(tr(\mV_{\sI_{l}^{t+1}}\T (\Gbl-\lambda_{l}^t\Gwl)\mV_{\sI_{l}^{t+1}})\right)\\
    &>&0,
\end{eqnarray*}
then
\begin{align*}
f^{\prime}(\lambda_{l}^t) = -f(\lambda_{l}^t)tr(\mV_{\sI_{l}^{t+1}}\T \Gwl \mV_{\sI_{l}^{t+1}}) \le 0,
\end{align*}
which means $f(\lambda_{l}^t)$ is a monotonic non-increasing function.
In addition, $f(\lambda_{l}^*) = 1$ because of the definition of $\lambda_{l}^*$.

Let $h(\lambda_{l})$ be a piecewise logarithmic linear approximation of $f(\lambda_{l})$ at point $\lambda_{l}^{t}$, namely
\begin{eqnarray*}
    h(\lambda_{l}) = \exp\left(\frac{f^{\prime}(\lambda_{l}^{t})}{f(\lambda_{l}^{t})}(\lambda_{l} - \lambda_{l}^{t}) + \log f(\lambda_{l}^{t})\right) > 0,
\end{eqnarray*}
then we know $h(\lambda_{l}^{t})=f(\lambda_{l}^{t})$, and $h^{\prime}(\lambda_{l}) = h(\lambda_{l})\cdot \frac{f^{\prime}(\lambda_{l}^{t})}{f(\lambda_{l}^{t})} \le 0$, which means $h(\lambda_l)$ is also monotonic non-increasing.

Let $h(\lambda_{l}^{t+1})=1$, we have
\begin{equation*}
   \lambda_{l}^{t+1} = \frac{tr(\mV_{\sI_{l}^{t+1}}\T \Gbl \mV_{\sI_{l}^{t+1}})}{tr(\mV_{\sI_{l}^{t+1}}\T \Gwl \mV_{\sI_{l}^{t+1}})},
\end{equation*}
which is the same value as the discrimination score calculated with the next index set~$\sI_{l}^{t+1}$.

Since $f(\lambda_{l})$ is a monotonic non-increasing function with optimal (i.e., the largest) $\lambda_{l}^{*}$ satisfying $f(\lambda_{l}^{*}) = 1$, we have $$h(\lambda_{l}^{t})=f(\lambda_{l}^{t})>f(\lambda_{l}^{*})=1=h(\lambda_{l}^{t+1})$$
for $\forall \lambda_{l}^{t} \ne \lambda_{l}^{*}$. This shows $\lambda_{l}^{t+1} > \lambda_{l}^{t}$ when $\lambda_{l}^{t} \ne \lambda_{l}^{*}$.

In addition, the definition of $\lambda_l$ guarantees that $\lambda_l$ can only take a finite number of different values during the iterative optimization process.

Thus, the discrimination $\lambda_{l}$ obtained by iteratively maximizing $\fHl(\sI;\lambda_{l})$ is monotonic increasing until it converges to the optimal value.
\end{proof}

\begin{algorithm}[t!]
    \caption{Layer-wise Pruning with Trace Ratio Criterion}
    \label{alg1}
    {\bf Input:}
    $N$ sampled data $\mX \in \R^{c\times w \times h \times N}$ and their labels $\mY \in \{1, \cdots, K\}^N$;
    Original CNN model $\fF(;\mTheta)$ with $L$ CONV layers;
    Original and pruned channel number $\vc = [c_1, \cdots, c_L]$ and $\vd = [d_1, \cdots, d_L]$ for all layers;
    Stop criterion $\epsilon$

    {\bf Output:}
    Channel masks $\{\vm_l\}_{l=1}^L$ for all layers

    \begin{algorithmic}[1]
        \STATE Set $\mOO_0 \leftarrow \mX$
        \FOR {$l \leftarrow 1$ to $L$}
            \STATE Calculate feature maps $\mO_l$ by $\mOO_{l-1}$ and CNN $\fF(;\mTheta)$
            \STATE Randomly set channel index subset ${\sI'}_{l}$ with $|{\sI'}_{l}|=d_{l}$%
            \STATE Calculate ${\lambda'}_{l}$ with ${\sI'}_{l}$
            \REPEAT
                \STATE Update $(\sI_{l}, \lambda_{l}) \leftarrow ({\sI'}_{l}, {\lambda'}_{l})$
                \STATE Calculate $\{s_{l, i}\}_{i=1}^{c_l}$ for all channels with $\lambda_{l}$
                \STATE Set ${\sI'}_l$ by selecting $d_l$ indexes from ${\{1, \cdots, c_l\}}$ with the largest $s_{l, i}$
                \STATE Calculate ${\lambda'}_l$ with ${\sI'}_l$
            \UNTIL{${\lambda'}_l - \lambda_{l} \leq \epsilon$}
            \STATE Set mask $\vm_l$ for layer $l$ based on $\sI_{l}$
            \STATE Calculate pruned feature maps $\mOO_l$ by $\vm_l$ and $\mOO_{l-1}$
        \ENDFOR
        \STATE {\bf return} $\{\vm_l\}_{l=1}^L$
    \end{algorithmic}
\end{algorithm}

The whole layer-wise pruning procedure, given the channel budgets for each layer, is shown in Algorithm~\ref{alg1}.

Theorems~\ref{theo:2}~and~\ref{theo:1} show that, given a well-trained unpruned network and channel numbers $d_l$, the pruning procedure in Algorithm~\ref{alg1} leads to monotonically increasing feature discrimination value until reaching the optimal solution~(by Theorem~\ref{theo:2}) and obtains the pruned network with an accuracy lower boundary proportional to the best-pruned network with large probability~(by Theorem~\ref{theo:1}).

Note that the objective in each layer is not a simply linear sum of each channel. Determining the value of $\lambda_{l}$ in $\nsli$ incorporates the joint-channel impact and depends on other channels. With the update of $\lambda_{l}$, channels selected in the previous rounds may be unselected.
In other words, the set of channels is not built by step-by-step greedy inclusion.

Compared to training-based pruning,
the time complexity of our layer-wise pruning procedure is linear to the number of sampled data, which is usually much smaller than (and increased sub-linearly to) the number of original training data, and it only performs inference once and no back-propagation.
Theoretically, the convergence of the pruning step is guaranteed; Empirically, it only uses three to four iterations to converge to the pruned network structures, taking a negligible amount of time.

\subsection{Greedy Architecture Optimization}
\label{sec:method-greedy}

We further propose an approximate algorithm to determine the architecture of the pruned model, i.e., to find the optimal channel numbers $\vd$, as a prerequisite of the layer-wise pruning stage shown in Algorithm~\ref{alg1}.
Besides the discrimination criteria, we also consider the FLOPs constraint, aiming to quickly estimate the effects of local channel number changes on the whole model's power and efficiency.

\begin{algorithm}[t!]
    \caption{Greedy Selection for Channel Numbers}
    \label{alg2}
    {\bf Input:}
    $N$ sampled data $\mX \in \R^{c\times w \times h \times N}$ and their labels $\mY \in [1, \cdots, K]^N$;
    Original CNN model $\fF(;\mTheta)$ with $L$ CONV layers;
    Initial channel number $d_{\text{min}}$;
    Step size $\eta$;
    FLOPs constraint $\nTB$

    {\bf Output:}
    Channel numbers $\vd = [d_1, \cdots, d_L]$ for all layers

    \begin{algorithmic}[1]
        \STATE Calculate feature maps $\{\mO_l\}_{l=1}^L$ for all layers with $\mX$
        \STATE Set all $L$ values in $\vd$ by $d_{\text{min}}$
        \STATE Randomly select $d_l$ channels in each layer and calculate $\{\lambda_l\}_{l=1}^L$ accordingly
        \STATE Calculate $\{\fDT_l(d_l)\}_{l=1}^L$ by Eq.\eqref{eq:delta} with $\{\mO_l\}_1^L$ and $\mY$
        \STATE Set $l_{\text{max}}  \leftarrow  \arg \max_{l} \fDT_l(d_l)$
        \STATE Set $d'  \leftarrow  d_{l_{\text{max}}}+ \eta$
        \WHILE {$\sum_{l\neq l_{\text{max}} } \fT_{l}(d_{l}) + \fT_{l_{\text{max}}}(d') \le \nTB$}
            \STATE Update $d_{l_{\text{max}}} \leftarrow d'$
            \STATE Update $\sI_{l_{\text{max}}}$ and $\lambda_{l_{\text{max}}}$ based on $d_{l_{\text{max}}}$
            \STATE Update $\fDT_{l_{\text{max}}} ( d_{l_{\text{max}}} )$ and other changed terms in $\fDT_{l}(d_l)$
            \STATE Set $l_{\text{max}}  \leftarrow  \arg \max_{l} \fDT_l(d_l)$
            \STATE Set $d'  \leftarrow  d_{l_{\text{max}}}+ \eta$
        \ENDWHILE
        \STATE {\bf return} $\vd = [d_1, \cdots, d_L]$
    \end{algorithmic}
\end{algorithm}

We first introduce a network discrimination approximation function with given $\{\lambda_{l}\}_{l=1}^{L}$, which is
$    \fD(\vd) = \sum_{l=1}^L \max_{|\sI_l|=d_l} \fHl(\sI_l; \lambda_l)$.
Taking the suboptimal solution in Theorem~\ref{theo:1} and ignoring the effects of $d_l$ in other layers, we have its derivative as follows:
\begin{equation}
\frac{\pfD(\vd)}{\partial d_l}
\approx \log\left(1+\frac{{\tilde{s}_{l, d_l+1}}}{\sum_{i=1}^{d_{l}}\tilde{s}_{l,i}}\right)
\approx  \frac{\tilde{s}_{l, d_{l}+1}}{\sum_{i=1}^{d_{l}}\tilde{s}_{l, i}} ,
\end{equation}
where $[\tilde{s}_{l, 1},\cdots,\tilde{s}_{l, c_l}]$ is the descending sorted elements of $\{\nsli \}_{i=1}^{c_l}$.
The two approximations come from the sequentially selection of element $d_l$ in Theorem~\ref{theo:1} and Taylor expansion, respectively.

On the other hand, to measure the network computational efficiency,
we define the FLOPs function of layer $l$ as
$\fT_l(\vd) = d_{l-1} d_l w_l h_{l} q_{l}^{2}$,
where $q_{l} \times q_{l}$, $w_l$, and $h_{l}$ is the $l$-th layer's kernel size, feature map width, and feature map height, respectively.
The FLOPs function for the entire network is
$\fT(\vd) = \sum_{l=1}^L \fT_l(\vd)$,
and its derivative for $d_i$ is as follows:
\begin{equation}
\frac{ \pfT(\vd)}{\partial d_l} = d_{l-1}w_lh_l{q_l^2} + d_{l+1}w_{l+1}h_{l+1}{q_{l+1}^2}.
\end{equation}

As a tradeoff to jointly considering computations and discriminations, we define the following function to evaluate the effect of changing each channel number $d_l$:

\begin{equation}
\fDT(d_l) = \frac{\pfD}{\pfT} .\label{eq:delta}
\end{equation}

For the global architecture optimization, we manipulate the channel numbers without selecting the channels.
Therefore, we take a greedy coordinate ascent strategy to find the optimal channel numbers, which is describe in Algorithm~\ref{alg2}.
It is worth noting that, similar to the layer-wise pruning procedure, this algorithm only performs inference once without backpropagation. After that, it only needs to recalculate a small portion of variables in each iteration.

\subsection{Overall Pruning Procedure}
\begin{figure}[t]
    \centering
    \includegraphics[width=0.98\columnwidth]{./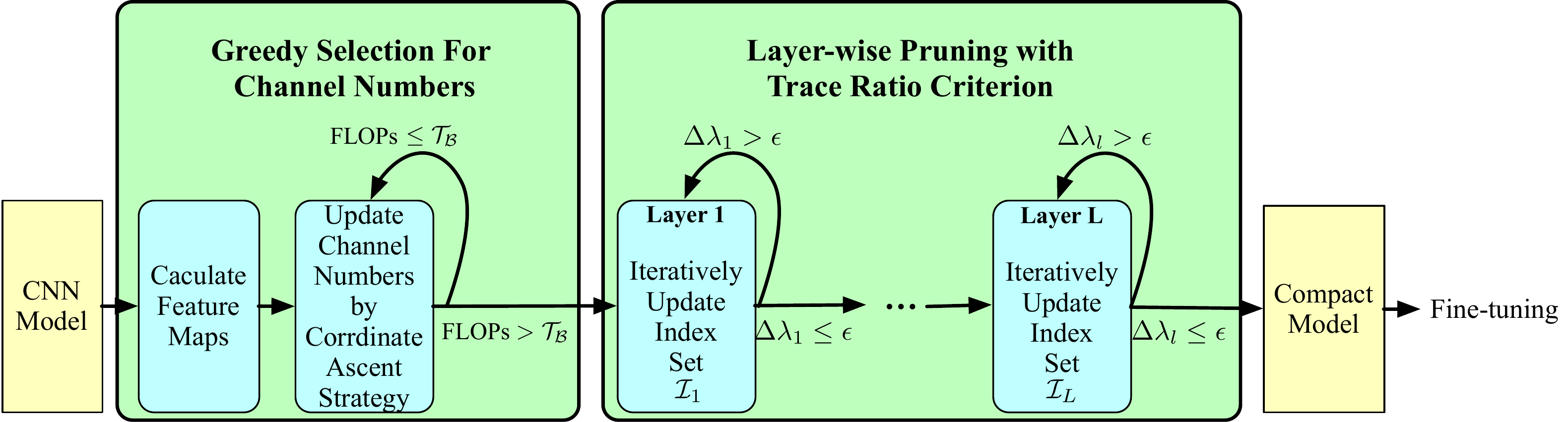}
    \caption{The pipeline of the overall pruning procedure in {\catro}.}
    \label{fig:pipeline}
\end{figure}

We take the two stages together to build {\catro}, the proposed channel pruning method.
As illustrated in Fig.~\ref{fig:pipeline},
{\catro} first employs Algorithm~\ref{alg2}~(Greedy Selection for Channel Numbers) to select the channel numbers for each layer and then uses Algorithm~\ref{alg1}~(Layer-wise Pruning with Trace Ratio Criterion) to prune each layer.
It fine-tunes the pruned network as the last step on all available training data, similarly to other pruning methods.

\section{Experiments}
\begin{table*}[t!]
    \centering
    \caption{Comparisons of the pruned ResNet networks on CIFAR-10 and CIFAR-100.}
    \resizebox{.85\textwidth}{!}
    {
        \begin{tabular}{c c|c c c| c c c}
            \toprule
            \multirow{2}{*}{Depth}& \multirow{2}{*}{Method}& \multicolumn{3}{c|}{CIFAR-10} & \multicolumn{3}{|c}{CIFAR-100} \\
             & & Accuracy & Acc. Drop $\downarrow$ & FLOPs (Drop $\downarrow$) &  Accuracy & Acc. Drop $\downarrow$ & FLOPs (Drop $\downarrow$)\\
            \midrule \midrule

            \multirow{9}{*}{20}
            & LCCL~\cite{dong2017more}  & 91.68\% & 1.06\%  & 2.61E7 (36.0\%) & 64.66\% & 2.87\% & 2.73E7 (33.1\%) \\
            & SFP~\cite{he2018soft}  & 90.83\% & 1.37\%  & 2.43E7 (42.2\%) & 64.37\% & 3.25\% & 2.43E7 (42.2\%) \\
            & FPGM~\cite{he2019filter}  & 91.09\% & 1.11\%  & 2.43E7 (42.2\%) & 66.86\% & 0.76\% & 2.43E7 (42.2\%) \\
            & CNN-FCF~\cite{li2019compressing} & 91.13\% & 1.07\% & 2.38E7 (41.6\%) & - & - & - \\
            & Taylor FO~\cite{molchanov2019importance} & 91.52\% & 0.48\% & 2.83E7 (32.7\%) & - & - & -\\
             & PScratch~\cite{WangZXZSZH2020Pruning} & 90.55\% & 1.20\% & 2.06E7 (50.0\%) & - & - & - \\
             & DHP~\cite{LiGZGT20DHP} & 91.54\% & 1.00\% & 2.14E7 (48.2\%) & - & - & - \\
             & SCOP~\cite{Tang2020SCOP} & 90.75\% & 1.44\% & 1.83E7 ({55.7}\%) & - & - & - \\
            \cmidrule(r){2-8}
            & Ours & \textbf{91.76\%} & 0.98\% & 1.95E7 ({52.7\%})  & \textbf{68.61\%} &  0.99\% & 2.02E7 (\textbf{51.0\%})\\
            \midrule

            \multirow{10}{*}{32}
            & LCCL~\cite{dong2017more}  & 90.74\% & 1.59\%  & 4.76E7 (31.2\%) & 67.39\% & 2.69\% & 4.32E7 (37.5\%) \\
            & SFP~\cite{he2018soft}  & 92.08\% & 0.55\%  & 4.03E7 (41.5\%) & 68.37\% & 1.40\% & 4.03E7 (41.5\%) \\
            & TAS~\cite{dong2019network}  & 93.16\% & 0.73\%  & 3.50E7 (49.4\%) & 72.41\% & -1.80\% & 4.25E7 (38.5\%) \\
            & FPGM~\cite{he2019filter}  & 92.31\% & 0.32\%  & 4.03E7 (41.5\%) & 68.52\% & 1.25\% & 4.03E7 (41.5\%) \\
            & CNN-FCF~\cite{li2019compressing} & 92.18\% & 0.25\% & 3.99E7 (42.2\%) & - & - & - \\
            & LFPC~\cite{he2020learning}  & 92.12\% & 0.51\%  & 3.27E7 (49.4\%) & - & - & - \\
             & SCOP~\cite{Tang2020SCOP}  & 92.13\% & 0.53\%  & 3.00E7 (55.8\%) & - & - & - \\
            & LFPC~\cite{he2020learning}  & 92.12\% & 0.51\%  & 3.27E7 (49.4\%) & - & - & - \\
            & DCPH~\cite{chen2021dynamical}  & 92.85\% & -0.49\%  & 4.82E7 (30.0\%) & 69.51\% & -0.63\% & 4.82E7(30.1\%) \\
             & MFP~\cite{he2022filter}  & {91.85\%} & {0.78\%}  & {3.23E7 (53.2\%)} & {-} & {-} & {-} \\
            \cmidrule(r){2-8}
            & Ours & \textbf{93.17\%}& 0.77\%& 2.98E7 (\textbf{56.1\%}) & 71.71\% & 0.24\% & 3.81E7 (\textbf{43.8\%}) \\

            \midrule
            \multirow{18}{*}{56}
            & LCCL~\cite{dong2017more}  & 92.81\% & 1.54\%  & 7.81E7 (37.9\%) & 68.37\% & 2.96\% & 7.63E7 (39.3\%) \\
            & SFP~\cite{he2018soft}  & 93.35\% & 0.24\%  & 5.94E7 (52.6\%) & 68.79\% & 2.61\% & 5.94E7 (52.6\%) \\
            & AMC~\cite{he2018amc}  & 91.90\% & 0.90\%  & 6.29E7 (50.0\%) & - & - & -  \\
            & DCP~\cite{zhuang2018discrimination}  & 93.49\% & 0.31\%  & 6.27E7 (50.0\%) & - & - & - \\
            & TAS~\cite{dong2019network}  & 93.69\% & 0.77\%  & 5.95E7 (52.7\%) & 72.25\% &  0.93\% & 6.12E7 (51.3\%) \\
            & FPGM~\cite{he2019filter}  & 93.49\% & 0.10\%  & 5.94E7 (52.6\%) & 69.66\% & 1.75\% & 5.94E7 (52.6\%) \\
            & CNN-FCF~\cite{li2019compressing} & 93.38\% & -0.24\% & 7.20E7 (42.8\%) & - & - & - \\
            & GAL~\cite{lin2019towards}  & 93.38\% & 0.12\%  & 7.83E7 (37.6\%) & - & - & - \\
            & CCP~\cite{peng2019collaborative}  & 93.50\% & 0.04\%  & 6.73E7 (47.0\%) & - & - & - \\
            & LFPC~\cite{he2020learning}  & 93.24\% & 0.35\%  & 5.91E7 (52.9\%) & 70.83\% & 0.58\% & 6.08E7 (51.6\%) \\
            & SCP~\cite{kang2020operation}  & 93.23\% & 0.46\%  & 6.10E7 (51.5\%) & - & - & - \\
            & HRank~\cite{lin2020hrank}  & 93.17\% & 0.09\%  & 6.27E7 (50.0\%) & - & - & - \\
            & DHP~\cite{LiGZGT20DHP} & 93.58\% & -0.63\% & 6.47E7 (49.0\%) & - & - & - \\
            & PScratch~\cite{WangZXZSZH2020Pruning} & 93.05\% & 0.18\% & 6.35E7 (50.0\%) & - & - & - \\
            & LeGR~\cite{ChinDZM2020Towards} & 93.70\% & 0.20\% & 5.89E7 (53.0\%) & - & - & - \\
            & {DCPH}~\cite{chen2021dynamical}  & {93.26\%} & {-0.60\%}  & {8.76E7 (30.2\%)} & {71.31\%} & {-0.41\%} & {8.78E7(30.0\%)} \\
            & {MFP}~\cite{he2022filter}  & {93.56\%} & {0.03\%}  & {5.94E7 (52.6\%)} & {-} & {-} & {-} \\
            \cmidrule(r){2-8}
            & Ours & \textbf{93.87\%} & 0.03\%& 5.83E7 (\textbf{54.0\%}) & 72.13\% &  0.67\% & 5.64E7 (\textbf{55.5\%}) \\
            \midrule

            \multirow{12}{*}{110}
            & LCCL~\cite{dong2017more}  & 93.44\% & 0.19\%  & 1.68E8 (34.2\%) & 70.78\% & 2.01\% & 1.73E8 (31.3\%) \\
            & SFP~\cite{he2018soft}  & 93.86\% & -0.18\%  & 1.21E8 (52.3\%) & 71.28\% & 2.86\% & 1.21E8 (52.3\%) \\
            & TAS~\cite{dong2019network}  & 94.33\% & 0.64\%  & 1.19E8 (53.0\%) & 73.16\% &  1.90\% & 1.20E8 (52.6\%) \\
            & FPGM~\cite{he2019filter}  & 93.85\% & -0.17\%  & 1.21E8 (52.3\%) & 72.55\% & 1.59\% & 1.21E8 (52.3\%) \\
            & CNN-FCF~\cite{li2019compressing} & 93.67\% & -0.09\% & 1.44E8 (43.1\%) & - & - & - \\
            & GAL~\cite{lin2019towards}  & 92.74\% & 0.76\%  & 1.30E8 (48.5\%) & - & - & - \\
            & LFPC~\cite{he2020learning}  & 93.07\% & 0.61\%  & 1.01E8 (60.0\%) & - & - & - \\
            & HRank~\cite{lin2020hrank}  & 93.36\% & 0.87\%  & 1.06E8 (58.2\%) & - &  - & - \\
            & DHP~\cite{LiGZGT20DHP} & 93.39\% & -0.06\% & 1.62E8 (36.3\%) & - & - & - \\
            & PScratch~\cite{WangZXZSZH2020Pruning} & 93.69\% & -0.20\% & 1.53E8 (40.0\%) & - & - & - \\
            & {DCPH}~\cite{chen2021dynamical}  & {94.11\%} & {-0.73\%}  & {1.77E8 (30.1\%)} & {72.79\%} & {-0.26\%} & {1.77E8(30.0\%)} \\
            & {MFP}~\cite{he2022filter}  & {93.31\%} & {0.37\%}  & {1.21E8 (52.3\%)} & {-} & {-} & {-} \\
            \cmidrule(r){2-8}
            & Ours & \textbf{94.41\%} & 0.41\% & 1.00E8 (\textbf{60.8\%}) & \textbf{74.04\%} & 0.45\% & 1.12E8 (\textbf{56.1\%}) \\
            \bottomrule
        \end{tabular}
    }
    \label{tab:cifar}
\end{table*}

\subsection{Datesets and Implementation Details}

\paragraph{Datasets}
We conducted our experiments on four datasets: CIFAR-10, CIFAR-100~\cite{krizhevsky2009learning}, ImageNet~\cite{deng2009imagenet}, and GTSRB~\cite{Houben-IJCNN-2013}.
Both CIFAR-10 and CIFAR-100 contain 50K training images and 10K test images, and they have 10 and 100 classes, respectively.
ImageNet contains 1.2M training images and 50K test images for 1000 classes.
GTSRB, which is used to evaluate pruning performance for classification subtasks, contains 39,209 training images and 12,630 test images of 43 types of traffic signs.

\paragraph{Implementation details}
For CIFAR-10/100 and GTSRB, we trained the original model 300 epochs using SGD with momentum 0.9, weight decay parameter 0.0005, batch size 256, and the initial learning rate of 0.1 dropped the factor of 0.1 after the 80th, 150th and 250th epochs. The step size is 1.
We finetuned 200 epochs for pruned models with an initial learning rate of 0.01 dropped by the factor of 0.1 after half epochs for fair comparisons. The models are trained on GPU Tesla P40 using Pytorch.
For quantitative performance comparisons, all our models do not prune the first convolution layer, and the minimal number of channels is three as the same with the input images. The numbers of channels are directly set to the minimum values. For the CIFAR-100 classification subtask compression, we moved about $3\sim5$ epochs from finetune phrase to the pruning phrase after pruning each block while keeping the total number of backward the same. In order to reduce overfitting, we limited the maximal number of channels with half of the original model. For the GTSRB classification subtask compression, we further limited the maximal number of channels with $\frac{3}{8}$ of the original model for the ultra-high compression ratios. For all subtask compression experiments, to get the high compression ratio, the first convolution layer was pruned in the same way as the other layers.
For the ImageNet, we trained 90 epochs, and the learning rate was decayed by the factor of 0.1 after the 30th, 60th epochs.

\paragraph{Hyperparameters}
The key hyperparameters are $d_{min}$ and number of data samples. We studied different numbers of samples on CIFAR-10/100. Results showed that CATRO are robust within a large range of sample number. We chose the value of $d_{min}$ with the following considerations.  Firstly, $d_{min}$ should be a positive value to avoid cutting off the network or layer-collapse issues. Secondly, $d_{min}$ should be small enough to have a large search space. We empirically set the $d_{min}$ to be 3, and we set all layers' as the same value for ease of use. We found out it may influence some layers’ pruning ratios, but the overall compression ratios are similar with almost the same accuracy performance. In practice, the minimum initial channel number $d_{min}$  is not sensitive to the overall channel number selection algorithm since most pruned layers are larger than the $d_{min}$.

\subsection{Quantitative Performance Comparisons}

\paragraph{ResNet on CIFAR}
We compared the classification performance of our proposed method with several state-of-the-art channel pruning approaches on CIFAR-10 and CIFAR-100 with ResNet-20/32/56/110~\cite{kaiming2016deep}.
As different baselines use different training efforts or strategies, the original accuracy values can be different.
We reported both the pruned accuracy and the accuracy drop from all baselines' original papers to draw fair comparisons.
Results in Table~\ref{tab:cifar} clearly demonstrate the superiority of our {\catro} over other channel pruning approaches.
For CIFAR-10, {\catro} consistently outperformed other channel pruning algorithms while achieving or approaching the highest FLOPs reduction ratios in ResNet-20/32/56/110.
In the case of ResNet-110, {\catro} reduced $60.8\%$ FLOPs with a notably high accuracy $94.41\%$ and reduced $56.1\%$ FLOPs with the highest accuracy $74.04\%$ on CIFAR-10 and CIFAR-100, respectively.

\begin{table}[h]
\centering
\caption{Comparisons of the pruned MobileNet-v2 on CIFAR-10.}
\resizebox{0.85\columnwidth}{!}
{
    \begin{tabular}{c| c c c}
            \toprule
            Method
            & Accuracy & Acc. Drop $\downarrow$ & FLOPs $\downarrow$\\
            \midrule \midrule
            DCP~\cite{zhuang2018discrimination}  & 94.02\% & 0.45\% & 26.4\% \\
            SCOP~\cite{Tang2020SCOP} & 94.24\% & 0.24\% & 40.3\% \\
            {DNCP}~\cite{zheng2022model} & {93.71\%} & {0.44\%} & {41.6\%} \\
            \cmidrule(lr){1-4}
            Ours & \textbf{94.27}\% & 0.20\% & \textbf{41.6}\% \\
            \bottomrule
        \end{tabular}
}
\label{tab:mobile}
\end{table}

\paragraph{MobileNet on CIFAR}
We further validated {\catro} on the compact MobileNet-v2 models.
Results are present on Table~\ref{tab:mobile}, and {\catro} reduced $41.6\%$ FLOPs with a notably high accuracy $94.27\%$.

\begin{table*}[t]
    \centering
    \caption{Comparisons of the pruned ResNet-34/50 networks on ImageNet.}
    \resizebox{0.7\textwidth}{!}
    {
        \begin{tabular}{c c| c c| c c| c}
            \toprule
            \multirow{2}{*}{Depth}& \multirow{2}{*}{Method}& \multicolumn{2}{c|}{Top-1} & \multicolumn{2}{c|}{Top-5}& \multirow{2}{*}{FLOPs Drop $\downarrow$}\\
             & & Accuracy & Acc. Drop $\downarrow$ & Accuracy & Acc. Drop $\downarrow$ &\\
            \midrule \midrule
            \multirow{4.5}{*}{34}&SFP~\cite{he2018soft} & 71.83\% & 2.09\% & 90.33\% & 1.29\% & 41.1\%  \\
            &FPGM~\cite{he2019filter} & 72.54\% & 1.38\% & 91.13\% & 0.49\% & 41.1\%  \\
            &ABCPruner~\cite{lin2020channel} & 70.98\% & 2.30\% & - & - & 41.0\% \\
            \cmidrule(r){2-7}
            &Ours & \textbf{72.75\%} & 0.50\% & \textbf{91.13}\% & 0.41\% & \textbf{42.9\%}  \\
            \midrule
            \multirow{6.5}{*}{50}&FPGM~\cite{he2019filter} & 75.59\% & 0.56\% & 92.27\% & 0.24\% & 42.2\%  \\
            &HRank~\cite{lin2020hrank} & 74.98\% & 1.17\% & 92.33\% & 0.51\% & 43.7\% \\
            &SCOP~\cite{Tang2020SCOP} & 75.95\% & 0.40\% & 92.70\% & 0.02\% & 45.3\% \\
            &LeGR~\cite{ChinDZM2020Towards} & 75.70\% & 0.20\% & 92.79\% & 0.20\% & 42.0\% \\
            &CNN-FCF~\cite{li2019compressing} & 75.68\% & 0.47\% & 92.68\% & 0.19\% & 46.0\% \\
            &{MFP}~\cite{he2022filter} & {75.67\%} & {0.48\%} & {92.81\%} & {0.06\%} & {42.2\%} \\
            \cmidrule(r){2-7}
            &Ours & \textbf{75.98\%} & 0.14\% & 92.79\% & 0.08\% & 45.8\%  \\
            \bottomrule
        \end{tabular}
    }
    \label{tab:imagenet}
\end{table*}

\paragraph{ResNet on ImageNet}
Results on ImageNet supported our findings on CIFAR, and we show the results of ResNet-34/50 in Table~\ref{tab:imagenet}, where {\catro} achieved the best accuracies with better or similar FLOPs drops.
For ResNet-34, with fewer FLOPs ($42.9\%$ FLOPs reduction), {\catro} achieved better classification accuracies compared to others. For ResNet-50, {\catro} achieved better or competitive classification accuracies compared to many state-of-the-art methods.

\subsection{Investigations on Determining Channel Numbers}

\begin{figure}[h]
\centering
    \includegraphics[width=0.48\textwidth]{./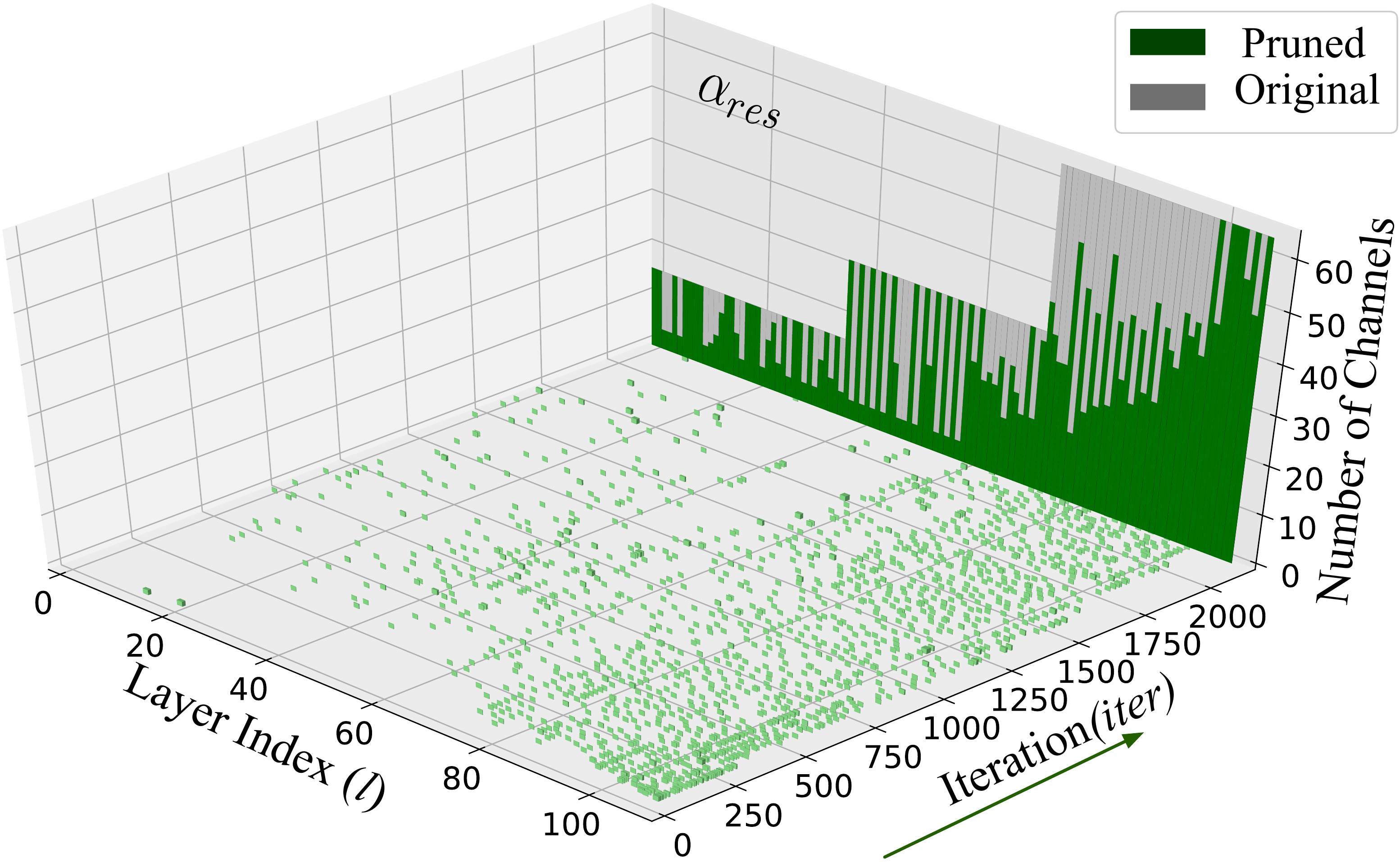}
    \caption{An illustration of the retaining channel number optimization procedure of ResNet-110 on CIFAR-10.}
    \label{fig:path}
\end{figure}

Fig.~\ref{fig:path} showcases the procedure of greedy architecture optimization (described in Section~\ref{sec:method-greedy}) for ResNet-110 on CIFAR-10 for better investigations of {\catro}.
A green point at $(iter,l,0)$ represents that at the $iter$-th iteration in Algorithm~\ref{alg2} our method decided to increase the retaining number of the $l$-th layer by one.
The channel numbers of the pruned and original network are visualized on the $\alpha_{res}$ plane in green and gray, respectively.
As shown in the figure, the proposed optimization method focused more on deeper layers (i.e., layers close to the output) at the beginning (e.g., the first 300 iterations) and gradually expanded other layers, which is consistent with the state that features in neural networks have a tendency to transfer from general to special~\cite{yosinski2014transferable,hu2017fast}.
After 2126 iterations, the optimization procedure ended up with a config of network architecture with the desired FLOPs (1.0E8).
The resulted network had 21 narrow bottleneck layers (with $\le 5$ channels) in the first half of the network and 37 fully recovered layers.

\subsection{Results on Classification Subtask Compression}
\label{subsec:subtasks}

\begin{table*}[t]
\caption{Comparisons of subtask classification performance of the pruned ResNet-32/56 on CIFAR-100.}
    \centering
    \resizebox{.95\textwidth}{!}{
        \begin{tabular}{c c| c c| c c| c c}
            \toprule
            \multirow{2}{*}{Depth}& \multirow{2}{*}{Method}& \multicolumn{2}{c|}{5-Class} & \multicolumn{2}{c|}{10-Class}& \multicolumn{2}{c}{20-Class}\\
             & & Acc. (Drop $\downarrow$) & FLOPs (Drop $\downarrow$) &Acc. (Drop $\downarrow$) & FLOPs (Drop $\downarrow$)& Acc. (Drop $\downarrow$) & FLOPs (Drop $\downarrow$)\\
            \midrule \midrule
            \multirow{3.5}{*}{32}
            & Direct~\cite{li2016pruning} & 91.40\% (1.80\%) & 1.11E7 (83.63\%) & 91.00\% (1.70\%) & 1.11E7 (83.63\%) & 83.05\% (3.70\%) & 1.11E7 (83.63\%) \\
            & FPGM~\cite{he2019filter} & 91.80\% (1.60\%) & 1.11E7 (83.63\%) & 91.40\% (1.30\%) & 1.11E7 (83.63\%) & 83.05\% (3.70\%) & 1.11E7 (83.63\%) \\
            \cmidrule(r){2-8}
            & Ours & \textbf{93.60\%} (-0.40\%) & 1.09E7 (\textbf{83.92\%}) & \textbf{92.00\%} (0.70\%) & 1.07E7 (\textbf{84.22\%}) & \textbf{84.45\%} (2.30\%) & 1.08E7 (\textbf{84.07\%}) \\
            \midrule
            \multirow{3.5}{*}{56}
            & Direct~\cite{li2016pruning} & 94.00\% (1.20\%) & 2.02E7 (84.07\%) & 92.70\% (2.10\%) & 2.02E7 (84.07\%) & 85.95\% (1.80\%) & 2.02E7 (84.07\%) \\
            & FPGM~\cite{he2019filter} & 94.20\% (1.00\%) & 2.02E7 (84.07\%) & 92.90\% (1.90\%) & 2.02E7 (84.07\%) & 86.15\% (1.60\%) & 2.02E7 (84.07\%) \\
            \cmidrule(r){2-8}
            & Ours & \textbf{95.00\%} (0.20\%) & 1.95E7 (\textbf{84.62\%}) & \textbf{93.90\%} (0.90\%) & 1.97E7 (\textbf{84.46\%}) & \textbf{86.55\%} (1.20\%) & 1.98E7 (\textbf{84.38\%}) \\
            \bottomrule
        \end{tabular}
    }
    \label{tab:few}
\end{table*}

We evaluated image classification subtask compressions to demonstrate the efficacy of {\catro} for both general purposes (CIFAR-100) and practical applications (GTSRB).

\paragraph{CIFAR-100}
CIFAR-100 is a benchmark dataset for general classification with 100 classes. Without losing generality, we directly selected the top 5/10/20 classes as subtasks of interests and pruned tiny networks for each specific subtask. We compared {\catro} with a data-independent channel pruning method (`Direct', which uses that $l_{2}$-norm~\cite{li2016pruning}) and a training-based channel pruning method (FPGM~\cite{he2019filter}).
As shown in Table~\ref{tab:few}, with a high compression ratio (84\% FLOPs drop), our method consistently outperformed others with a distinct gap in all settings.

\begin{table*}[t]
\caption{Comparisons of subtask classification performance of the pruned ResNet-20 on GTSRB.}
    \centering
    \resizebox{0.85\textwidth}{!}{
        \begin{tabular}{c c| c c c| c c c}
            \toprule
            \multirow{2}{*}{Drop Ratio}&\multirow{2}{*}{Method}& \multicolumn{3}{c|}{Limit Speed Signs} & \multicolumn{3}{c}{Direction Signs}\\
             & & Accuracy & Acc. Drop $\downarrow$ & FLOPs (Drop $\downarrow$) & Accuracy & Acc. Drop $\downarrow$ & FLOPs (Drop $\downarrow$)\\
            \midrule \midrule
            \multirow{2.5}{*}{$\approx$97\%}
            &Direct~\cite{li2016pruning} & 95.75\% & 4.00\% & 9.60E5 (97.67\%) & 98.19\% & 1.02\% & 9.60E5 (97.67\%)  \\
            \cmidrule(r){2-8}
            &Ours & \textbf{96.97\%} & 2.98\% & 9.40E5 (\textbf{97.72\%}) & \textbf{98.59\%} & 0.62\% & 9.08E5 (\textbf{97.80\%}) \\
            \midrule
            \multirow{2.5}{*}{$\approx$90\%}
            &Direct~\cite{li2016pruning} & 99.23\% & 0.52\% & 4.08E6 (90.10\%) & 98.47\% & 0.74\% & 4.08E6 (90.10\%)  \\
            \cmidrule(r){2-8}
            &Ours & \textbf{99.39\%} & 0.36\% & 3.71E6 (\textbf{90.75\%}) & \textbf{99.04\%} & 0.17\% & 3.79E6 (\textbf{90.80\%}) \\
            \bottomrule
        \end{tabular}
    }
    \label{tab:gtsrb}
\end{table*}

\paragraph{GTSRB}

\begin{figure}[h]
\centering
    \includegraphics[clip=true,width=\columnwidth,viewport=0 0 700 290]{./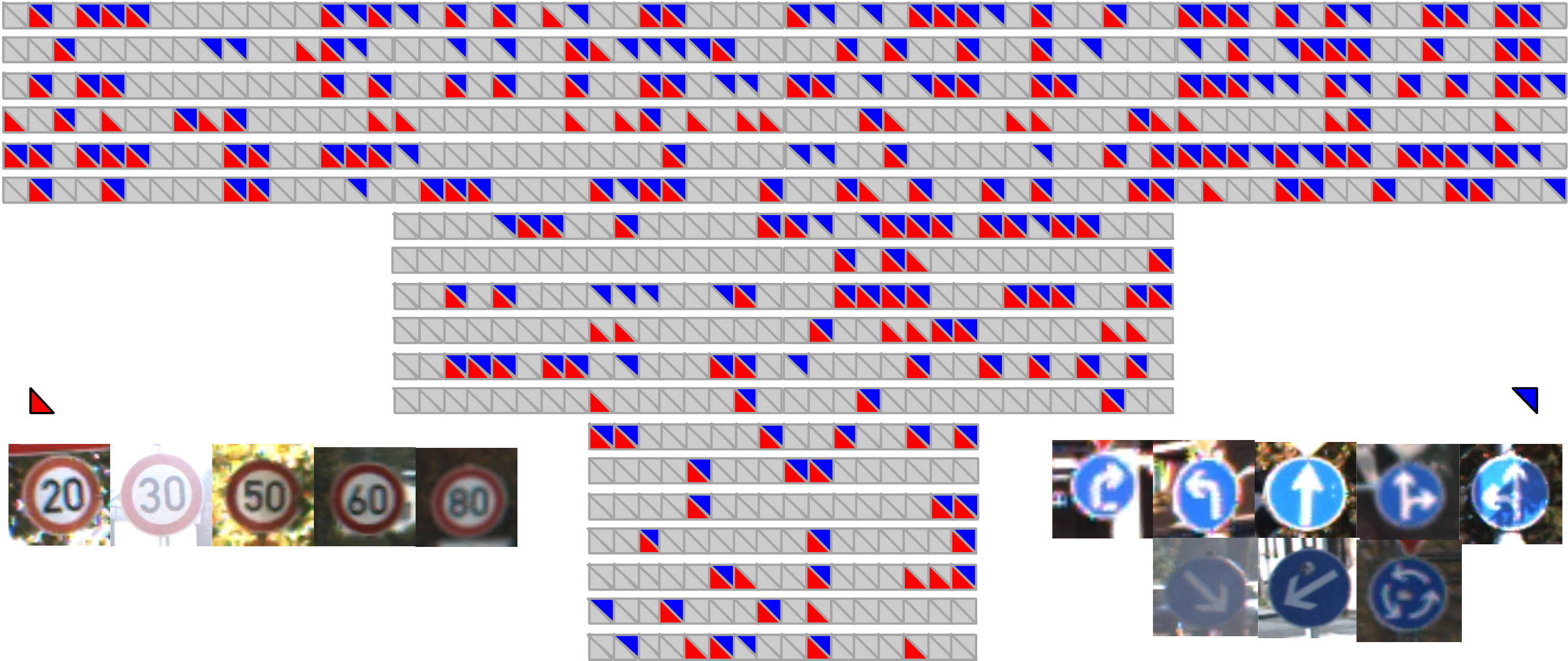}
    \caption{Channel visualizations of the pruned ResNet-20 for limit-speed-sign and direction-sign classification subtasks on GTSRB.
    The red and blue triangles indicate the preserved channels for the limit-speed-sign and direction-sign classification, respectively.
    }
    \label{fig:task}
\end{figure}

We used GTSRB to demonstrate the efficacy of {\catro} for practical usages, e.g., in intelligent traffic and driver-assistance systems.
To mimic the real-world application scenarios, we chose two subtasks from GTSRB (limit-speed-sign recognition and direction-sign recognition) and evaluated the pruning methods with two ultra-high compression ratios (90\% and 97\%).
Table~\ref{tab:gtsrb} shows the results of pruned ResNet-20 and clearly indicates the better performance of {\catro} in terms of both the accuracy and the model size.
Fig.~\ref{fig:task} visualizes the pruned ResNet-20 models from {\catro} for different tasks.
As shown in the figure, we can find out that different channels contributed to different tasks, and {\catro} obtained specific pruned models for each task by selecting different channels.
To be more specific, some of the selected/preserved channel indexes in Fig.~\ref{fig:task} are the same (the squares filled with both red and blue) while each subtask has its own specific channels (the squares filled with only red or only blue).

\begin{table}[t]
\centering
\caption{Comparisons for task exchange on GTSRB of the pruned ResNet-20 with 90\% pruning ratio.}
\resizebox{1\columnwidth}{!}
{
\begin{tabular}{c | c c c}
\toprule
 \multirow{2}{*}{Name} & Pruning Task & Target Task & \multirow{2}{*}{Accuracy} \\
 &  (Algorithms~\ref{alg1}~$\&$~\ref{alg2}) & (Finetuning) & \\
\midrule \midrule
 LSS-DS & Limit Speed Signs& Direction Signs & 98.53\% \\
 DS-DS (Ours) & Direction Signs & Direction Signs & \textbf{99.04\%}\\
 \cmidrule{1-4}
 DS-LSS & Direction Signs & Limit Speed Signs & 99.08\%\\
 LSS-LSS (Ours) & Limit Speed Signs & Limit Speed Signs & \textbf{99.39\%}\\
\bottomrule
\end{tabular}
}
\label{tab:task_change}
\end{table}

\paragraph{Study with task exchange on GTSRB}
To investigate the subtask-oriented channel selection, we further explored a task-exchange experiment.
We compared the original CATRO model on one target task (e.g., Direction Signs) with a task-exchange CATRO model, which is pruned (i.e., channel selection) on another task (e.g., Limited Speed Signs) and fine-tuned on the target task (Direction Signs). These two models are named by ``DS-DS'' and ``LSS-DS'', respectively.
We expected the model pruned and fine-tuned on the same task would perform better, and the results shown in Table~\ref{tab:task_change} confirms it.
Comparing “LSS-DS” and “DS-DS”,  we find the accuracy of Direct-Sign with task-exchanging channels drops $0.54\%$. Similarly, we can find the accuracy drop by comparing “LSS-LSS” and “DS-LSS”. The results demonstrate that CATRO is a task-oriented pruning approach and its selected/pruned channels are based on the specific subtask.

\subsection{Ablation Studies}

\begin{table*}[t]
\centering
\caption{Ablation study on the greedy architecture optimization (Algorithm~\ref{alg2}) on CIFAR-10.}
\resizebox{0.83\linewidth}{!}
{
\begin{tabular}{c c |c c c c}
\toprule
Network & Method & Predefined Pruning Ratio or Structure & FLOPs & Accuracy & Acc. Drop $\downarrow$ \\
\midrule \midrule
\multirow{4}{*}{ResNet-56} & SFP~\cite{he2018soft} & 40\% & 5.94E7 & 93.35\% & 0.24\% \\
& FPGM~\cite{he2019filter}  & 40\% &  5.94E7 & 93.49\% & 0.10\% \\
& CATRO-W/O Algorithm~\ref{alg2} & 40\% & 5.94E7 & 93.83\% & 0.08\% \\
\cmidrule(r){2-6}
& CATRO (Ours) & -- & 5.83E7 & 93.87\% & 0.03\% \\
\midrule
\multirow{7}{*}{ResNet-20}
& Taylor FO~\cite{molchanov2019importance} & Model-A (Table 4 in \cite{molchanov2019importance}) & 2.83E7 & 91.52\% & 0.48\%\\
& CATRO-W/O Algorithm~\ref{alg2} &  Model-A (Table 4 in \cite{molchanov2019importance}) & 2.83E7 & 92.30\% & 0.44\%\\
\cmidrule(r){2-6}
& SFP~\cite{he2018soft}  & 30\% & 2.43E7 & 90.83\% & 1.37\% \\
& FPGM~\cite{he2019filter} & 30\% & 2.43E7 & 91.09\% & 1.11\% \\
& CATRO-W/O Algorithm~\ref{alg2} & 30\% & 2.43E7 & 91.68\% & 1.06\% \\
\cmidrule(r){2-6}
& CATRO (Ours) & -- & 1.95E7 & 91.76\% & 0.98\% \\
\bottomrule
\end{tabular}
}
\label{tab:ablation}
\end{table*}

\paragraph{Investigations with fixed compression ratio}

To evaluate the effectiveness of the proposed trace ratio criterion in layer-wise pruning, we provided ablation studies with fixed compressions ratio (either uniform pruning or by given structures). This ablation study followed the settings of the baselines, and we replaced Algorithm~\ref{alg2} (searching the pruning ratios of each layer) in {\catro} by directly using the same pruning ratios as baselines. The results in the Table~\ref{tab:ablation} clearly demonstrated the effectiveness of the proposed criterion.
On one hand, under the same compression ratio strategy, {\catro} (even without the global architecture optimization) clearly outperformed the baselines on both accuracies and accuracy drops.
On the other hand, incorporating the architecture optimization further boosted the effects of {\catro} and led to a more compact but powerful network. For example, 0.04\% accuracy improvement with 1.1E6 FLOPs reduction for ResNet-56, and 0.08\% accuracy improvement with 4.8E6 FLOPs reduction for ResNet-20.

\paragraph{Investigations on different layer groups}

\begin{table}[t]
\centering
\caption{Ablation study on the layer-wise pruning with trace-ratio criterion (Algorithm~\ref{alg1}) on CIFAR-10. $\checkmark$ and $L_1$ refers to using Algorithm~\ref{alg1} and $l_1$ pruner in that layer group, respectively.}
\resizebox{.9\columnwidth}{!}
{
	 \begin{tabular}{c  c c c | c }
	 \toprule
     \multirow{2}{*}{Network} & \multicolumn{3}{c|}{Layer Group} & \multirow{2}{*}{Acc. (Drop $\downarrow$)} \\
     & Shallow & Middle & Deep & \\
	 \midrule
	 \midrule
	 \multirow{6.5}{*}{ResNet-110}
	 & $L_1$  & $L_1$ & $L_1$ & 93.67\% (1.15\%) \\
	 & $\checkmark$  & $L_1$ & $L_1$ & 93.80\% (1.02\%) \\
	 & $\checkmark$  & $\checkmark$ & $L_1$ & 94.26\% (0.56\%) \\
	 & $L_1$  & $\checkmark$ & $\checkmark$ & 94.40\% (0.42\%) \\
	 & {$L_1$}  & {$L_1$} & {$\checkmark$} & {93.97\% (0.85\%)} \\
     \cmidrule(r){2-5}
     & \multicolumn{3}{c|}{{\catro} (Ours)}
     & 94.41\% (0.41\%) \\
	 \midrule
	 \multirow{6.5}{*}{ResNet-56}
	 & $L_1$  & $L_1$ & $L_1$ & 93.03\% (0.87\%) \\
	 & {$\checkmark$}  & {$L_1$} & {$L_1$} & {93.27\% (0.63\%)} \\
	 & $\checkmark$  & $\checkmark$ & $L_1$ & 93.52\% (0.38\%) \\
	 & $L_1$  & $\checkmark$ & $\checkmark$ & 93.73\% (0.17\%) \\
	 & {$L_1$}  & {$L_1$} & {$\checkmark$} & {93.50\% (0.40\%)} \\
     \cmidrule(r){2-5}
     & \multicolumn{3}{c|}{{\catro} (Ours)} 
     & 93.87\% (0.03\%) \\
	 \midrule
	 \multirow{6.5}{*}{ResNet-32}
	 & $L_1$  & $L_1$ & $L_1$ & 92.92\% (1.02\%)  \\
	 & {$\checkmark$}  & {$L_1$} & {$L_1$} & {92.94\% (1.00\%)} \\
	 & $\checkmark$  & $\checkmark$ & $L_1$ & 92.98\% (0.96\%) \\
	 & $L_1$  & $\checkmark$ & $\checkmark$ & 93.04\% (0.90\%) \\
	 & {$L_1$}  & {$L_1$} & {$\checkmark$} & {93.00\% (0.94\%)} \\
     \cmidrule(r){2-5}
     & \multicolumn{3}{c|}{{\catro} (Ours)} 
     & 93.17\% (0.77\%) \\
	 \midrule
	 \multirow{6.5}{*}{ResNet-20}
	 & $L_1$  & $L_1$ & $L_1$ & 91.19\% (1.55\%) \\
	 & {$\checkmark$}  & {$L_1$} & {$L_1$} & {91.30\% (1.44\%)} \\
	 & $\checkmark$  & $\checkmark$ & $L_1$ & 91.48\% (1.26\%) \\
     & $L_1$  & $\checkmark$ & $\checkmark$ & 91.64\% (1.10\%) \\
	 & {$L_1$}  & {$L_1$} & {$\checkmark$} & {91.54\% (1.20\%)} \\
	 \cmidrule(r){2-5}
     & \multicolumn{3}{c|}{{\catro} (Ours)}
     & 91.76\% (0.98\%) \\
	 \bottomrule
	 \end{tabular}
}
\label{tab:stages}
\end{table}

To further figure out to which extent the feature discriminations and the proposed criteria benefit channel pruning in {\catro}, we conducted additional ablation studies on the layer-wise pruning method in different parts of the network.
To be more specific, for each ResNet model, we grouped all layers into the Shallow-Group (the first one-third conv layers), the Middle-Group (the second one-third conv layers), and the Deep-Group (the last one-third conv layers).
With the same channel numbers obtained by Algorithm~\ref{alg2}, we replaced the layer-wise pruning (Algorithm~\ref{alg1}) in {\catro} by the common $l_1$ pruner in different layer groups.
As shown in Table~\ref{tab:stages}, we can find that removing trace criterion in either shallow or deep stages leads to accuracy drop, and removing trace criterion in the Deep-Group leads to more accuracy drop compared to the Shallow-Group.
Therefore, we can conclude that using discrimination in either shallow or deep stages both help and the proposed feature discrimination criteria has its superiority over common baselines.
This also lead to an open and exciting topic on better utilizing the discrimination or combining it with other pruning strategies, which we will explore in future work.

\subsection{Discussions}

\begin{figure}[h]
    \centering
    \includegraphics[width=1.0\columnwidth]{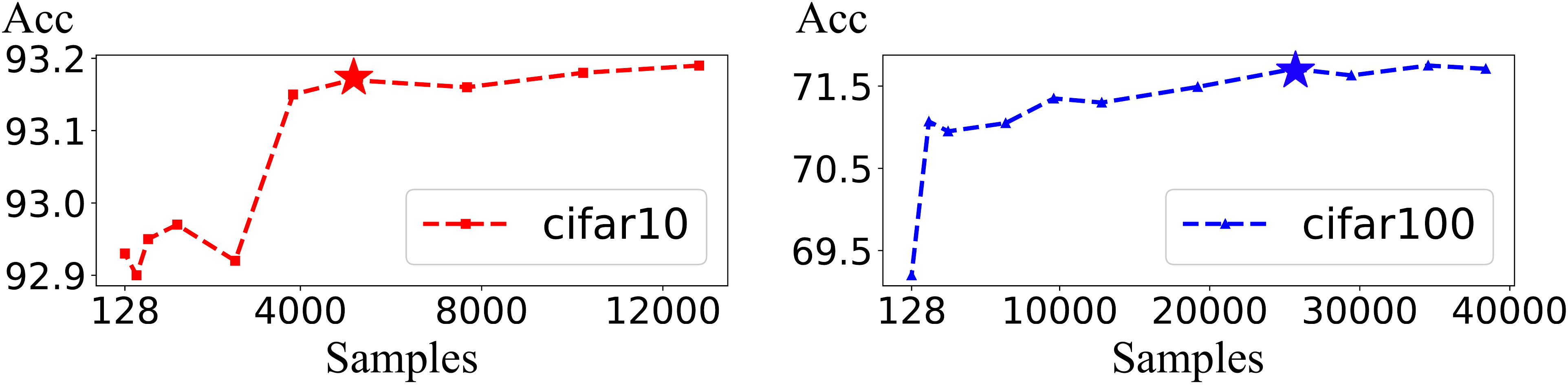}
    \caption{Comparisons of data sample numbers for ResNet-32 on the CIFAR datasets.
    Left: CIFAR-10 with a FLOPs drop of 56.1\%.
    Right: CIFAR-100 with a FLOPs drop of 43.8\%.
    }
    \label{fig:sample}
\end{figure}

\paragraph{Studies on number of data samples used in {\catro}}
The number of data samples used in the trace ratio calculation is a key factor.
In order to investigate the effects of the numbers of data samples, we studied different numbers of samples on CIFAR-10 and CIFAR-100. Results are shown in Fig.~\ref{fig:sample}, and we found that more data provided limited gains with enlarging samples. Furthermore, we found that CATRO needed fewer samples for each class with more classes, which may be due to the help of more negative samples from other classes. Since time complexity of {\catro} is independent to the number of training samples or batches, a reasonable number of data is a number that is small but represents the data distribution. In our experiments, we empirically randomly sampled 5,120/25,600 samples for CIFAR-10/CIFAR-100, which is marked with star in Fig.~\ref{fig:sample}.

\paragraph{Pruning time complexity of {\catro}}
The main difference in pruning time lies in the trace ratio computation steps.
The time complexity is linear to the number of sampled data which is usually much smaller than original training data.
We provided a detailed time analysis on different phases of the training schedule on a machine with Tesla P40 GPU and E5-2630 CPU.
We conducted the experiment for ResNet-110 on CIFAR-10 and used 5,120 out of the 50K samples.
The one-time forward propagation took about $22.17s$ (including feature map transfer from GPU to CPU), and the average trace optimization for one layer on CPU took about $0.82s$.
The time of Algorithm~\ref{alg2} is linear to the time of updating $\lambda$. Note that $\lambda$ in Algorithm~\ref{alg2} can be calculated in parallel in advance, Algorithm~\ref{alg2} only took about $1.0s$ to perform the greedy search on the CPU, while common training forward and backward propagation on GPU took about $63s$ per epoch on average.
Overall, the pruning time for CATRO was about $22.17 + 1.0 + 0.82\times 110 =113.37s$.
Therefore, CATRO needs much less time for the pruning step than other training-based methods, e.g., GBN~\cite{you2019gate}, which takes 10 epochs for each tock phase with $10\times 63s/epoch=630s$ and many rounds of tock phases.
As {\catro} has similar fine-tuning time to other pruning methods and much less pruning time, it is a much more efficient pruning method.

\section{Conclusion}
In this paper, we present {\catro}, a novel channel pruning method via class-aware trace ratio optimization.
By investigating and preserving the joint impact of channels, which is rarely considered in existing pruning methods, {\catro} can coherently determine the network architecture and select channels in each layer, both in an efficient non-training-based manner.
Both empirical studies and theoretical justifications have been presented to demonstrate the effectiveness of our proposed {\catro} and its superiority over other state-of-the-art channel pruning algorithms.

\section*{Acknowledgment}
This work was supported in part by the China Postdoctoral Science Foundation (No. 2020M680563),
and in part by NSFC under Grant No. 52131201.

\ifCLASSOPTIONcaptionsoff
  \newpage
\fi


\bibliographystyle{IEEEtran}
\bibliography{ref}


\begin{IEEEbiography}[{\includegraphics[width=1in,height=1.25in,clip,keepaspectratio]{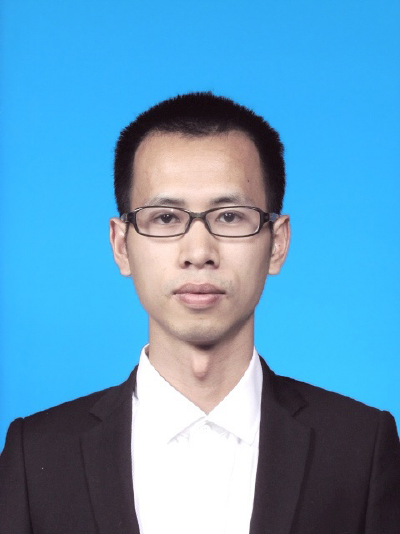}}]{Wenzheng Hu}
received the B.S. degree from the Department of Automation, Beihang University, Beijing, China, in 2013, and Ph.D. degrees in control science and engineering from Tsinghua University, Beijing, China, in 2019. He is
a Postdoc in the State Key Laboratory of Automotive Satefy and Energy, Tsinghua University, Beijing, China, and he is a researcher at Kuaishou Technology Co., Ltd., P.R.China. His research interests include machine learning, deep learning and intelligent transportation
systems.
\end{IEEEbiography}

\begin{IEEEbiography}[{\includegraphics[width=1in,height=1.25in,clip,keepaspectratio]{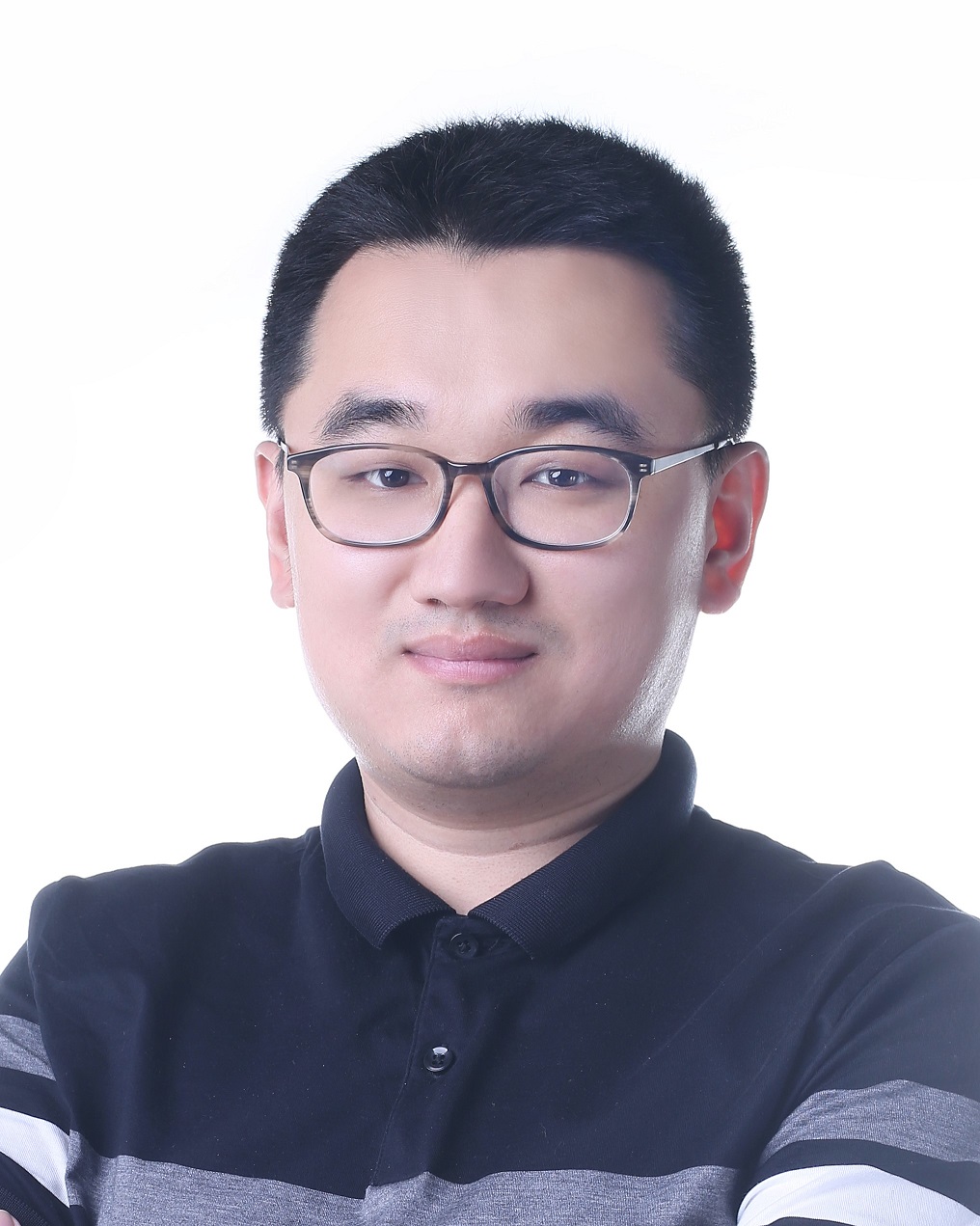}}]{Zhengping Che}
received the Ph.D. degree in Computer Science from the University of Southern California, Los Angeles, CA, USA, in 2018, and the B.Eng. degree in Computer Science from Tsinghua University, Beijing, China, in 2013.
He is now with AI Innovation Center, Midea Group.
His research interests lie in the areas of deep learning, computer vision, and time series analysis with applications to robot learning.
\end{IEEEbiography}

\begin{IEEEbiography}[{\includegraphics[width=1in,height=1.25in,clip,keepaspectratio]{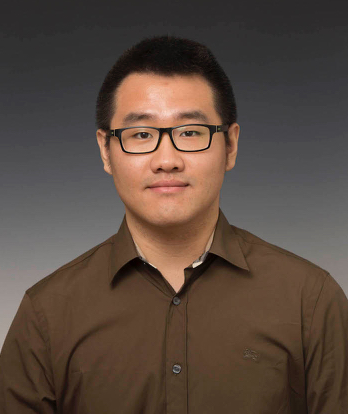}}]{Ning Liu}
received the Ph.D. degree in computer engineering from the Northeastern University, Boston, MA, USA, in 2019. He is a researcher at Midea Group. His current research interests lie in deep learning, deep model compression and acceleration, deep reinforcement learning, and edge computing.
\end{IEEEbiography}

\begin{IEEEbiography}[{\includegraphics[width=1in,height=1.25in,clip,keepaspectratio]{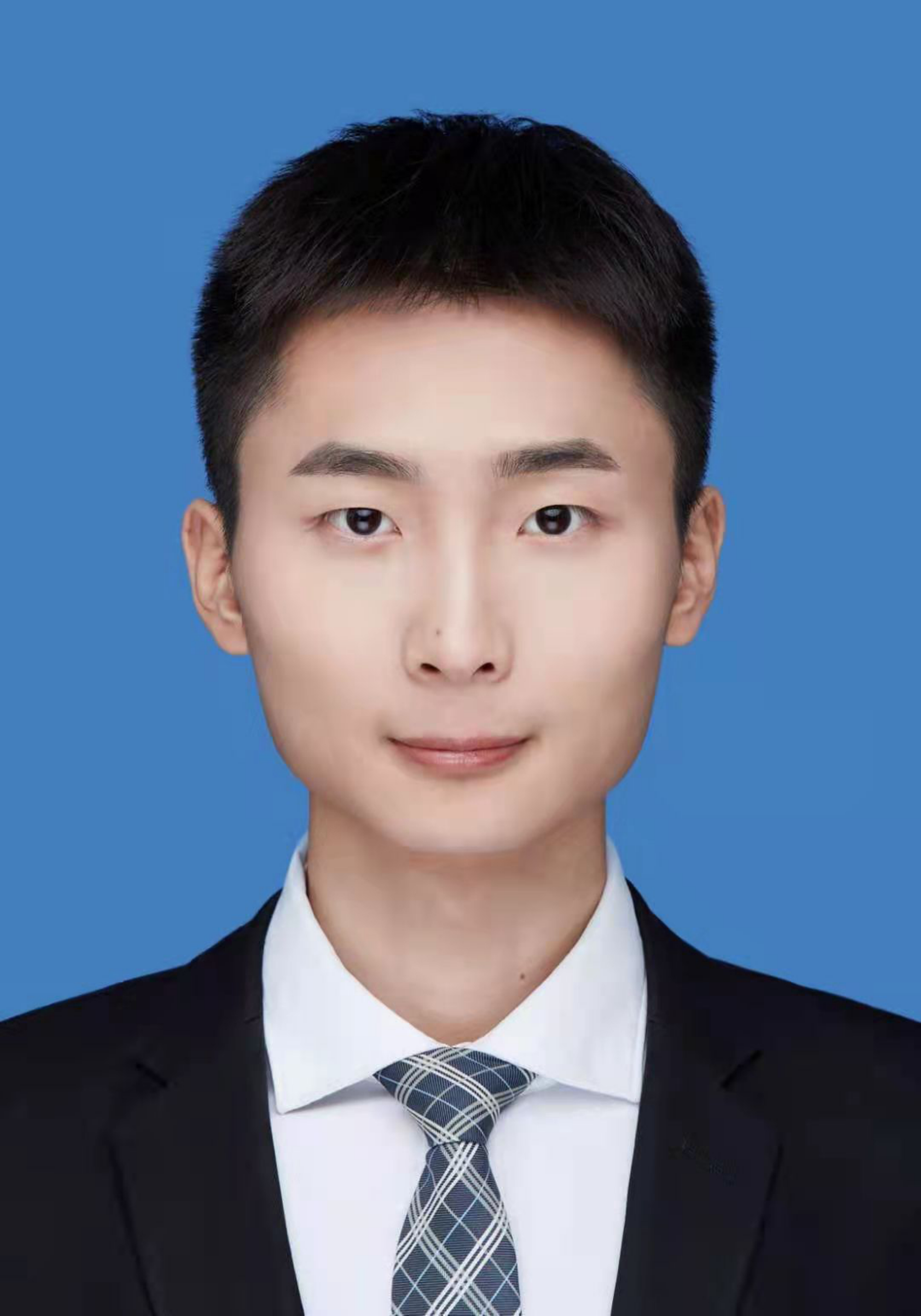}}]{Mingyang Li}
received the Master degree from in
Department of Automation, Tsinghua University, Beijing,
China, in 2020. He is currently an engineer in KE Holdings Inc., Beijing, China. His research interests include machine learning and computer vision.
\end{IEEEbiography}

\begin{IEEEbiography}[{\includegraphics[width=1in,height=1.25in,clip,keepaspectratio]{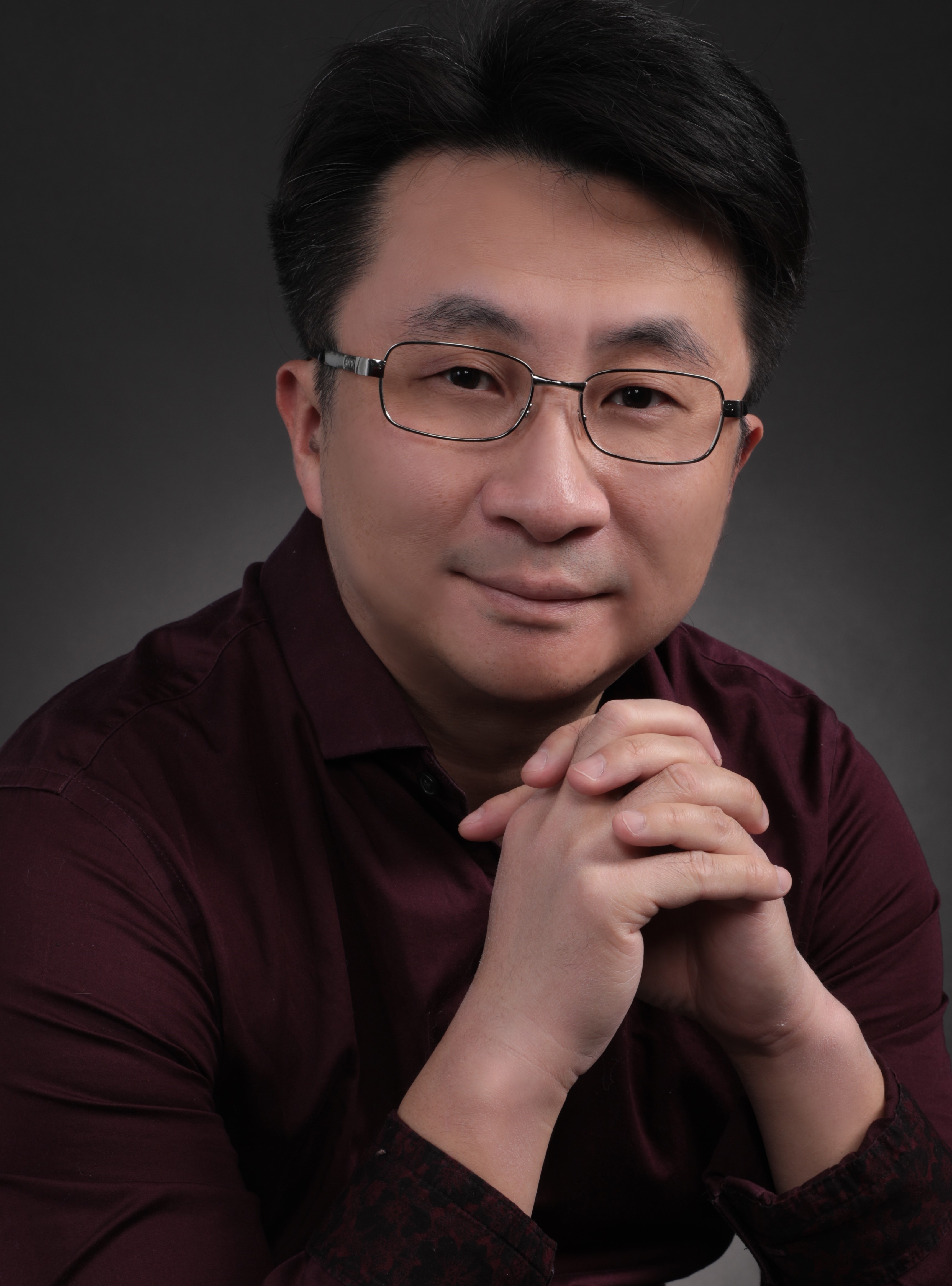}}]{Jian Tang}
(F’2019) received his Ph.D degree in Computer Science from Arizona State University in 2006. He is an IEEE Fellow and an ACM Distinguished Member. He is with Midea Group.
His research interests lie in the areas of AI, IoT, Wireless Networking and Mobile Computing. Dr. Tang has published over 180 papers in premier journals and conferences. He received an NSF CAREER award in 2009. He also received several best paper awards, including the 2019 William R. Bennett Prize and the 2019 TCBD (Technical Committee on Big Data) Best Journal Paper Award from IEEE Communications Society (ComSoc), the 2016 Best Vehicular Electronics Paper Award from IEEE Vehicular Technology Society (VTS), and Best Paper Awards from the 2014 IEEE International Conference on Communications (ICC) and the 2015 IEEE Global Communications Conference (Globecom) respectively. He has served as an editor for several IEEE journals, including IEEE Transactions on Big Data, IEEE Transactions on Mobile Computing, etc. In addition, he served as a TPC co-chair for a few international conferences, including the IEEE/ACM IWQoS'2019, MobiQuitous'2018, IEEE iThings'2015. etc.; as the TPC vice chair for the INFOCOM'2019; and as an area TPC chair for INFOCOM 2017-2018. He was also an IEEE VTS Distinguished Lecturer, and the Chair of the Communications Switching and Routing Committee of IEEE ComSoc.
\end{IEEEbiography}

\begin{IEEEbiography}[{\includegraphics[width=1in,height=1.25in,clip,keepaspectratio]{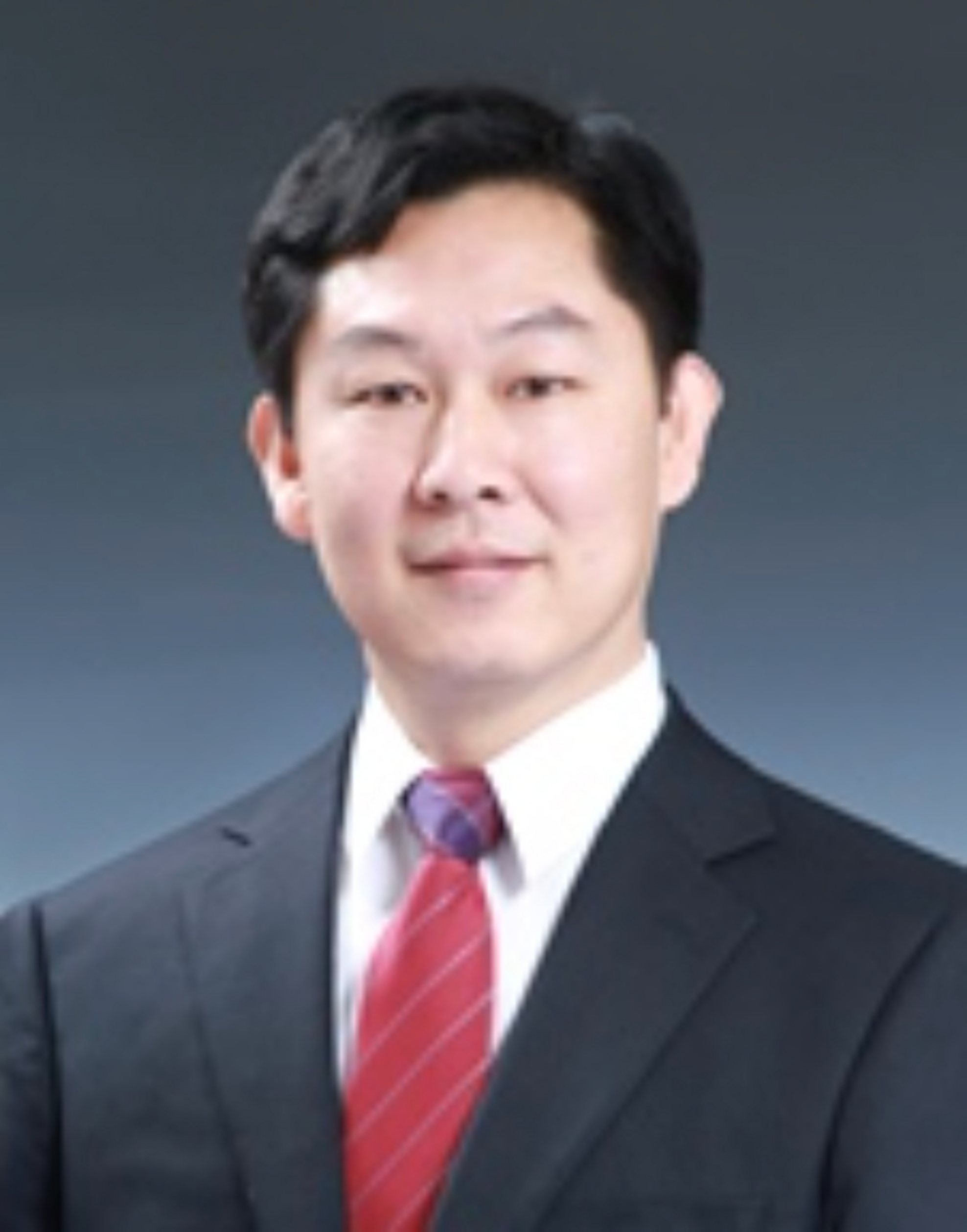}}]{Changshui Zhang}
(M’02–SM’15–F’18) received the B.S. degree in mathematics from Peking University, Beijing, China, in 1986, and the M.S. and Ph.D. degrees in control science and engineering from Tsinghua University, Beijing, China, in 1989 and 1992, respectively. In 1992, he joined the Department of Automation, Tsinghua University, where he is currently a Professor. He has authored more than 200 articles, and
he is a member of the Standing Council of the Chinese Association
of Artificial Intelligence. Dr. Zhang is currently an Associate Editor of the Pattern
Recognition Journal and the IEEE TPAMI, and he is also a Fellow member of IEEE. His current research interests include pattern recognition
and machine learning.
\end{IEEEbiography}

\begin{IEEEbiography}[{\includegraphics[width=1in,height=1.25in,clip,keepaspectratio]{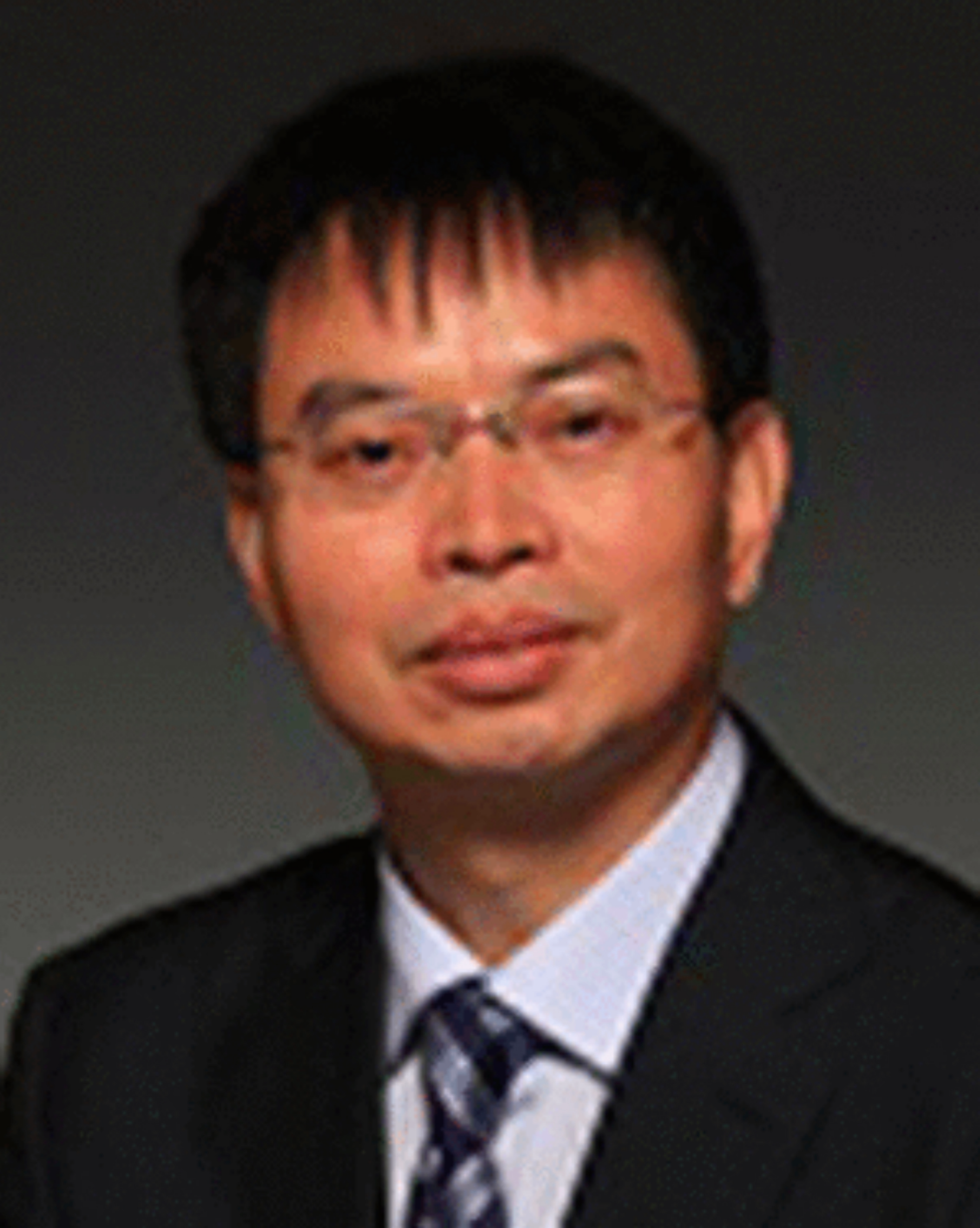}}]{Jianqiang Wang}
received the B.Tech. and M.S. degrees in automotive application engineering from the Jilin University of Technology, Changchun, China, in 1994 and 1997, respectively, and the Ph.D. degree in vehicle operation engineering from Jilin University, Changchun, in 2002. He is currently a Professor with the School of Vehicle and Mobility, Tsinghua University, Beijing, China. He has authored more than 40 journal papers and is currently a coholder of 30 patent applications. His research focuses on intelligent vehicles, driving assistance systems, and driver behavior. Dr. Wang was a recipient of the Best Paper Awards in IEEE Intelligent Vehicles (IV) Symposium 2014, IEEE IV 2017, 14th ITS Asia-Pacific Forum, the Distinguished Young Scientists of National Science Foundation China in 2016, and the New Century Excellent Talents in 2008.
\end{IEEEbiography}

%




\end{document}